\documentclass[letterpaper]{article} 
\usepackage{aaai23}  
\usepackage{times}  
\usepackage{helvet}  
\usepackage{courier}  
\usepackage[hyphens]{url}  
\usepackage{graphicx} 
\urlstyle{rm} 
\usepackage{natbib}  
\usepackage{caption} 
\frenchspacing  
\setlength{\pdfpagewidth}{8.5in}  
\setlength{\pdfpageheight}{11in}  
%
\usepackage{algorithm}
\usepackage{algorithmic}

%
\usepackage{amsmath}
\usepackage{amssymb}
\usepackage{amsfonts}
\usepackage{amsthm}
\usepackage{subfig}
\usepackage{longtable}
\usepackage{multirow}
\usepackage{appendix}
\graphicspath{{fig_IGD/}}
\newtheorem{lemma}{Lemma}
\newtheorem{assumption}{Assumption}
\newtheorem{theorem}{Theorem}

\newcommand{\be}{\begin{equation}}
\newcommand{\ee}{\end{equation}}
\newcommand{\ba}{\begin{array}}
\newcommand{\ea}{\end{array}}
\newcommand{\bea}{\begin{eqnarray}}
\newcommand{\eea}{\end{eqnarray}}
\newcommand{\beas}{\begin{eqnarray*}}
\newcommand{\eeas}{\end{eqnarray*}}
\newcommand{\p}{\partial}
\newcommand{\x}{\mathbf{x}}
\newcommand{\y}{\mathbf{y}}
\newcommand{\W}{\mathbf{W}}
\newcommand{\w}{\mathbf{w}}
\newcommand{\mH}{\mathbf{H}}
\newcommand{\inn}[1]{\left< #1 \right>}
\newcommand{\norm}[1]{\left\|#1\right\|}
\newcommand{\f}[2]{\frac{#1}{#2}}

\newcommand{\R}{\mathbb{R}}
\newcommand{\Og}{\Omega}
\renewcommand{\u}{\mathbf{u}}
\renewcommand{\v}{\mathbf{v}}
%
\pdfinfo{
/TemplateVersion (2023.1)
}

\setcounter{secnumdepth}{2} 

%



\title{Implicit Stochastic Gradient Descent for Training Physics-informed Neural Networks}
\author {
    Ye Li, Song-Can Chen, Sheng-Jun Huang
}
\affiliations {
    College of Computer Science and Technology/Artificial Intelligence,
       Nanjing University of Aeronautics and Astronautics\\
       MIIT Key Laboratory of Pattern Analysis and Machine Intelligence, Nanjing, China\\
    yeli20@nuaa.edu.cn, s.chen@nuaa.edu.cn, huangsj@nuaa.edu.cn
    }

\begin{document}

\maketitle

\begin{abstract}
Physics-informed neural networks (PINNs) have effectively been demonstrated in solving forward and inverse differential equation problems,
but they are still trapped in training failures when the target functions to be approximated exhibit high-frequency or multi-scale features.
In this paper, we propose to employ implicit stochastic gradient descent (ISGD) method to train PINNs for improving the stability of training process.
We heuristically analyze how ISGD overcome stiffness in the gradient flow dynamics of PINNs, especially for problems with multi-scale solutions.
We theoretically prove that for two-layer fully connected neural networks with large hidden nodes, randomly initialized ISGD converges to a globally optimal solution for the quadratic loss function.
Empirical results demonstrate that ISGD works well in practice and compares favorably to other gradient-based optimization methods such as SGD and Adam, while can also effectively address the numerical stiffness in training dynamics via gradient descent.
\end{abstract}

\section{Introduction}
Gradient descent (GD) and practical stochastic gradient descent with mini-batch gradients (SGD)
are widely used optimization algorithms, especially in optimizing deep neural networks.
Formally, the goal of optimization is to find a weight vector $\hat{\theta}$ in parameter space
$\R^m$ that minimizes a loss $L(\theta)$.
The GD algorithm is the updating procedure of model weights in the direction of the steepest loss gradient:
\be\label{gd}
\theta_{n+1} = \theta_{n} - \alpha \cdot \nabla L(\theta_n),
\ee
where $\alpha$ is the learning rate.
The SGD replaces the gradient $\nabla L(\theta)$ with a mini-batch gradient $\nabla \hat{L}_i(\theta)$, where $\hat{L}_i$ is the loss computed on mini-batch data instead of the whole dataset.
The continuous gradient flow is defined as a curvature $\theta(t)$ that satisfies the following ordinary differential equation (ODE):
\be\label{gd_ode}
\frac{d}{dt}\theta(t) = -\nabla_{\theta} L(\theta(t)).
\ee
It is easy to show that when the learning rate is sufficiently small,
the discrete updates $\{\theta_n\}_{n=0}^{\infty}$ computed by Eq.\eqref{gd} stay close to a function $\{\theta(t_n)\}_{n=0}^{\infty}$ where $t_n=n\alpha$.
Variants based on GD/SGD,
such as AdaGrad \citep{Adagrad}, RMSMprop \citep{RMSProp}, and Adam \citep{Adam}, have been developed in recent years.

Despite its numerous successes in practical optimization tasks such as optimizing deep neural networks,
GD/SGD may suffer from numerical instability in some key hyperparameters, such as the learning rate and batch size. For example, if the learning rate is misspecified, GD/SGD may numerically diverge, and the model training fails.
The main reason is the \emph{stiffness} in the gradient flow dynamics.
Typically, the gradient flow dynamics is called a \emph{stiff ODE} when the gap between the maximum and minimum eigenvalues of the Hessian matrix is large \citep{Wang20a}.
We can simply perform a linearization for the gradient flow \eqref{gd_ode} and obtain
\be
\frac{d}{dt}\tilde{\theta}(t) = -\nabla^2_{\theta} L(\tilde{\theta}(t))\cdot \tilde{\theta}(t).
\ee
The largest eigenvalue of the Hessian dictates the fastest time-scale of the ODEs.
In the language of numerical analysis, to ensure the numerical stability of GD, we need $\alpha\leq 2/\lambda_{\max}(\nabla^2_{\theta}L(\theta))$,
where $\lambda_{\max}(\nabla^2_{\theta}L(\theta))$ is the maximum eigenvalue of the Hessian matrix \citep{butcher2016numerical}.

From the theory of numerical analysis, GD/SGD is not suitable for stiff ODEs,
because a very small learning rate and very large number of iterations are required to maintain numerical stability.
One of the outstanding first-order solvers with strong stability for stiff ODEs is the implicit (backward) Euler method:
\be\label{sec-eq1}
\theta_{n+1} = \theta_{n} - \alpha \cdot \nabla L(\theta_{n+1}),
\ee
where a large learning rate can be used.
Eq.\eqref{sec-eq1} is also known as the implicit gradient descent (IGD) or implicit stochastic gradient descent (ISGD) method, as the next iteration $\theta_{n+1}$ appears implicitly on the right side of Eq.\eqref{sec-eq1}, and cannot be computed explicitly.

\begin{figure*}[tbp]
   \centering
     \subfloat[smooth solution $u_L(x)=\sin(2\pi x)$]
     {\includegraphics[width=0.23\textwidth]{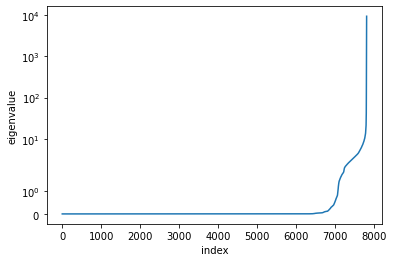}
     \includegraphics[width=0.23\textwidth]{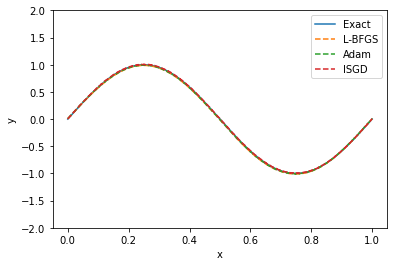}}
     \qquad\subfloat[multi-scale solution $u_H(x)=\sin(2\pi x)+0.1\sin(50\pi x)$]
     {\includegraphics[width=0.23\textwidth]{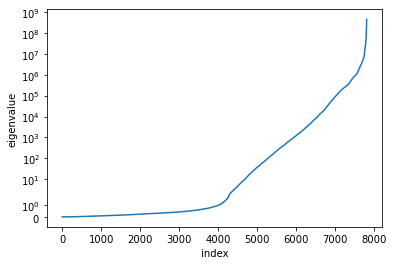}
     \includegraphics[width=0.23\textwidth]{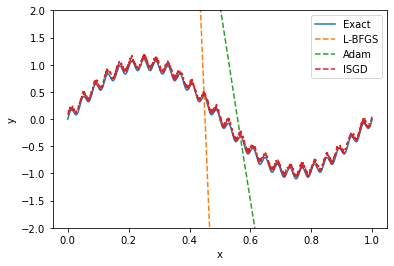}}
     \caption{\textit{1D Poisson equation}: Results for the heuristic example in Section \ref{heuristic} obtained by training a conventional PINN (5-layer, 200 hidden units, $\tanh$ activations) with gradient descent based Adam optimizer, quasi-Newton based L-BFGS optimizer and our ISGD optimizer. All eigenvalues of $\nabla^2_{\theta}L(\theta)$ are computed and arranged in increasing order.
     (a): smooth solution $u_L(x)=\sin(2\pi x)$, the maximum eigenvalue is 1.1e+04, non-stiff and all three optimizers trained well. (b): solution with multi-scale features $u_H(x)=\sin(2\pi x)+0.1\sin(50\pi x)$, the maximum eigenvalue is 4.6e+08, stiff and both Adam and L-BFGS failed to train, while our ISGD trained well.}\label{fig:heuristic-example}
\end{figure*}

Physics-informed neural networks (PINNs) are neural networks with outputs constrained to approximately satisfy a system of partial differential equations (PDEs) by using a regularization functional $\mathcal{R}(u_{\theta}(\mathbf{x}))$ that typically represents the residual of PDEs.
A general loss function representation of PINNs takes the form
\begin{equation}
  L(\theta) = \f{1}{N_u}\sum_{i=1}^{N_u}|\mathbf{u}_i-u_{\theta}(\mathbf{x}_i)|^2+
  \mathcal{R}(u_{\theta}(\mathbf{x})),
\end{equation}
where the given set input output pairs $(\mathbf{x}_i,\mathbf{u}_i)$ are corresponding to the initial/boundary conditions of PDEs.
The most popular optimizers for training PINNs are gradient descent based Adam optimizer and quasi-Newton based L-BFGS optimizer \citep{DeepXDE21}.
Howerver, the additional regularization term $\mathcal{R}(u_{\theta}(\mathbf{x}))$ has been shown to \emph{increase the stiffness} of the gradient dynamics \cite{Wang20a}, causing model training failures especially when the target functions to be approximated exhibit high-frequency or multi-scale features.
Typically, for stiff solutions, L-BFGS is more likely to be stuck at a bad local minimum, and Adam may need very small learning rate and very large number of iterations.
We claim that IGD/ISGD is more stable than GD/SGD and L-BFGS in the PINNs training when fitting multi-scale solutions.
As an example, Figure \ref{fig:heuristic-example} contrasts these approaches.
As the solution of Poisson equation changes from smooth to multi-scale, the maximum eigenvalue of Hessian increases significantly and the gradient flow dynamics of PINN becomes stiff, both Adam and L-BFGS become divergent while our IGD/ISGD is still convergent.

\subsection{Contributions}
Our main contributions can be summarized in the following points:
\begin{itemize}
  \item We first propose to employ the IGD/ISGD method to train PINNs. We theoretically and numerically show that IGD/ISGD can overcome the stiffness in the gradient flow dynamics of PINNs, especially for PDEs with multi-scale solutions.
  \item We used practical L-BFGS and Adam optimizer to deal with the implicit updates in IGD/ISGD, which is effective in practice. The computational cost is comparable to Adam. Furthermore, the method is stable for the learning rate and batch size, making it easier for nonexperts to process neural network training tasks.
  \item The IGD global convergence property is proven. We theoretically prove that for two-layer fully connected neural networks with large hidden nodes, randomly initialized IGD converges to a globally optimal solution at a linear convergence rate for the quadratic loss function.
\end{itemize}

\subsection{Related Work}
\textbf{Gradient Descent. }
Global convergence of gradient descent based methods have been proved when training deep neural network despite the objective function being non-convex \citep{Du19,Du19D,Du18,ALS19,ZCZ20}.
The dynamics of neural network weights under GD converge to a point that is close to the minimum norm solution under proper conditions \citep{SS21}.

Toulis and his collaborators \citep{ToulisAiroldi17,Toulis16,Toulis14} first theoretically studied the implicit stochastic gradient descent algorithm,
and claimed it to be more stable than standard stochastic gradient descent.
However, both the theoretical and practical results of them are only suited to generalized linear models.
The implicit scheme was extended to combine with the ResNet architecture with implicit Euler skip connections (called IE-ResNet) by \citep{BE20b} to improve the robustness and generalization ability.
The IGD/ISGD method was also applied to optimize the k-means clustering problem \citep{BE18} and the objective matrix factorization loss function that appears in recommendation systems \citep{BE20a}, and the convergence time was effectively improved.

\textbf{PINNs. }
With the rapid growth of deep learning,
using neural networks to represent PDE solutions has attracted the attention of many researchers.
Based on the early studies of \citet{PU92,LLF98},
\citet{PINN19} proposed the pioneering work of PINNs to solve both forward and inverse problems involving nonlinear PDEs.
PINNs have demonstrated remarkable power in applications including fluid dynamics \citep{RYK20,JCLK21,MJK20}, biomedical engineering \citep{SYP20}, meta-material design \citep{FZ19,CLK20}, software packages \citep{DeepXDE21}, and numerical simulators \citep{SimNet20,CWW21}.
Adaptive activation functions can be applied to accelerate PINN training \citep{JKK20b,JKK20a,JSKK}.
However, despite early empirical success, the original formulations of PINNs  often struggles to handle problems exhibiting high-frequency and multi-scale behavior.

Recent works by \citet{Wang20a,Wang20b,Wang20c} have identified two fundamental weaknesses in conventional PINN formulations.
The first is the remarkable discrepancy in the convergence rate between the data-based loss function and the physical-based loss function.
The second is related to the spectral bias, which indeed exists in PINN models and is the leading reason that prevents them from accurately approximating high-frequency or multi-scale functions.
In fact, they demonstrated that the gradient flow of PINN models becomes increasingly stiff for PDE solutions exhibiting high-frequency or multi-scale behavior.
This result motivates us to use robust implicit numerical schemes such as IGD/ISGD for the numerical solution to the gradient flow of PINN models.

\subsection{Organization of the paper}
In Section 2, we present the methodology of the proposed IGD/ISGD method.
The PINNs framework is also introduced briefly for completeness.
Two heuristic examples are presented to show the strong stability of the IGD/ISGD method.
In Section 3, we analyze the training dynamics of the IGD/ISGD method when applied to neural network training tasks.
Some technical proofs are given in the Appendix.
In Section 4, we report various computational examples for inferring the solution of ordinary/partial differential equations by PINNs.
Additional computational examples for regression and classification problems are given in the Appendix.
Finally, we conclude in Section 5 with a summary.

\section{Methodology}

\subsection{Physics-informed neural networks}
PINNs are neural networks that imbeds differential equations into neural network training.
The initial/boundary condition data of the differential equations are treated as the supervised learning component in the objective loss function, while the residual of the differential equations is applied as an unsupervised regularization factor in the objective loss function.
We consider a parametrized PDE system given by:
\beas
\begin{array}{l}
  \mathcal{F}(\x,u,u_x,\ldots,\lambda) = 0, \quad \x\in\Og, \\
  u(\x) = g_0(\x),\quad \x\in \p\Og,
\end{array}
\eeas
where $\x$ are the spatial and time coordinates, $u=u(\x)$ is the solution to the PDE with boundary/initial data $g_0(\x)$, $\mathcal{F}$ denotes the PDE residual, and $\lambda$ is the PDE parameter.
For example, $\mathcal{F}=-u_{xx}-f(x)=0$ is the simplest 1D Poisson equation for a given function $f(x)$.
The vanilla PINN uses a fully connected feed-forward neural network $u_{\theta}(x)$ to approximate the solution $u(x)$ by minimizing the following loss function:
\be\label{pinn-loss1}
L(\theta) = \omega_d L_{data} + \omega_f L_{PDE},
\ee
where
\beas
L_{data} &=& \f{1}{N_d}\sum_{j=1}^{N_d}|u_{\theta}(\x_d^j)-g_0(\x_d^j)|^2 ,\\
L_{PDE} &=& \f{1}{N_f}\sum_{i=1}^{N_f}|\mathcal{F}(\x_f^i)|^2.
\eeas
Here, $\{\x_d^j\}_{j=1}^{N_d}$ represents the training data points on $\p\Og$ while $\{\x_f^i\}_{i=1}^{N_f}$ represents the set of residual points in $\Og$.
$\omega_f$ and $\omega_d$ are the user-specified weighting coefficients for different loss terms.
The first term $L_{data}$ includes the known boundary/initial conditions and experimental data, which is the usual supervised data-driven part of the neural network.
To compute the residuals in the loss function, automatic differentiation is applied to compute the derivatives of the solution with respect to the independent variables.
This constitutes the physics-informed part of the neural network as given by the second term $L_{PDE}$.

The resulting optimization problem is to find the minimum of the loss function by optimizing the trainable parameters $\theta$.
Gradient descent based first-order optimizers such as SGD and Adam \citep{Adam},
or quasi-Newton based optimizers like L-BFGS \citep{L-BFGS},
are widely used in PINNs training.
However, as \citet{Wang20b} claimed, ``...PINNs using fully connected architectures often fail to achieve stable training and produce accurate predictions, especially when the underlying PDE solutions contain high-frequencies or multi-scale features".
The gradient flow dynamics of PINNs will become stiff as multi-scale phenomena appear, so explicit GD based optimizers may be unstable, and L-BFGS is more likely to be stuck at a bad local minimum.
As we mentioned in the previous section, implicit schemes like IGD/ISGD are more stable to overcome the stiffness problems.
Two illustrative examples are presented to show the robustness of IGD/ISGD in the next section.


\subsection{Heuristic examples with stability}\label{heuristic}
In this section, we present two heuristic examples to show the stability of IGD/ISGD and the instability of GD/IGD.

\textbf{Analytical stiff problem. }The first example is to theoretically analysis the learning rate constraint in the gradient flow dynamics of stiffness problems.
We denote a fabricated loss function by
\be
L(\theta_1,\theta_2) = \f{K_1}{2}(\theta_1-\theta_1^*)^2+\f{K_2}{2}(\theta_2-\theta_2^*)^2,\nonumber
\ee
where $\theta_i\in\mathbb{R},\;i=1,2$ are two parameters to be optimized,
$K_i>0,\;i=1,2$ are two constants.
The eigenvalues of the Hessian matrix of $L(\theta_1,\theta_2)$ are characterized by $K_1$ and $K_2$.
When $K_1$ and $K_2$ differ in scales, for example, $K_1=10^{-4}$ and $K_2=10^4$,
the gradient flow of the loss function suffers from the stiffness phenomenon.

A direct computation shows that the loss function update procedure of GD has the following relation:
\bea
\frac{L(\theta_1^{n+1},\theta_2^{n+1})}{L(\theta_1^{n},\theta_2^{n})}
\leq \max\{(1-\alpha K_1)^2,(1-\alpha K_2)^2\}.
\eea
Typically, we need $D=\max\{(1-\alpha K_1)^2,(1-\alpha K_2)^2\}\leq 1$ to guarantee loss decay, which implies $\alpha\leq \f{2}{\max\{K_1,K_2\}}$.
When $K_1=10^{-4}$ and $K_2=10^4$, we have $\alpha\leq 10^{-4}$ and $D\leq 1-10^{-8}$, meaning that the loss decays very slowly, and very large number of iterations (at least $\mathcal{O}(10^8)$) are needed to converge.
For a large learning rate $\alpha$, the loss decay rate $D$ may be greater than 1, and the loss may increase as the iterations increase, causing numerical instability in the gradient flow dynamics computation.

For IGD method, the loss function update procedure has the following relation:
\bea
\frac{L(\theta_1^{n+1},\theta_2^{n+1})}{L(\theta_1^{n},\theta_2^{n})}
\leq \max\{\f{1}{(1+\alpha K_1)^2},\f{1}{(1+\alpha K_2)^2}\}.
\eea
The loss decay rate $D=\max\{\f{1}{(1+\alpha K_1)^2},\f{1}{(1+\alpha K_2)^2}\}$
satisfies $D< 1$ automatically for all learning rates $\alpha>0$ and regardless of the scales of $K_1,K_2$, and $D$ is even smaller for larger $\alpha$.
This shows the strong stability of IGD to deal with stiffness phenomena.

\textbf{1D Poisson equation with multi-scale solution. }This heuristic example is to show the advantage of IGD/ISGD when the gradient flow dynamics of PINN is stiff. We consider a simple 1D Poisson equation
\begin{equation}\label{1dPoisson}
  -\Delta u(x) = f(x),\quad x\in(0,1)
\end{equation}
subject to the boundary condition
\begin{equation*}
  u(0)=u(1)=0.
\end{equation*}
We consider two fabricated solutions: one is $u_L(x)=\sin(2\pi x)$ exhibiting low frequency on the whole domain, and another is $u_H(x)=\sin(2\pi x)+0.1\sin(50\pi x)$ exhibiting low frequency in the macro-scale and high frequency in the micro-scale.
Though this example is simple and pedagogical, it resembles many practical scenarios with multi-scale phenomenons.

We represent the unknown solution $u(x)$ by a 5-layer fully-connected neural network $u_{\theta}(x)$ with 200 units per hidden layer.
$N_r=1000$ training points $\{x_i,f(x_i)\}$ are uniformly sampled in the interval $(0,1)$.
Figure \ref{fig:heuristic-example} shows the results obtained by training PINN with gradient descent based Adam optimizer \citep{Adam} with default settings for a maximum $10^7$ epochs, quasi-Newton based L-BFGS optimizer \citep{L-BFGS} with default settings, and our ISGD method with learning rate 0.1 for a maximum $10^4$ epochs.
We observe that all three optimizers can train PINN well for smooth solution $u_L(x)$ when there is non-stiff.
As multi-scale solution $u_H(x)$ appears, the maximum eigenvalue of Hessian has a significant rise from 1.1e+04 to 4.6e+08.
The gradient flow dynamics of PINN becomes stiff, and the popular Adam optimizer is incapable of training PINN to the correct solution even after a million training epochs.
The L-BFGS optimizer is also failed to train.
As a comparison, our ISGD method can train PINN well both for smooth $u_L(x)$ as well as multi-scale $u_H(x)$ with larger learning rate and smaller iterations.

\subsection{Loss decay of GD/IGD}
\citet{Wang20a} shows that the loss decay of GD is
\bea
  & & L(\theta_{n+1})-L(\theta_n) \nonumber\\
  &=& \alpha \norm{\nabla_{\theta}L(\theta_n)}_2^2
  \left(-1+\f{1}{2}\alpha\sum_{i=1}^{N}\lambda_iy_i^2\right),
\eea
where $\lambda_1\leq \lambda_2\leq \cdots\leq\lambda_N$ are eigenvalues of the Hessian matrix $\nabla_{\theta}^2L(\xi)$,
and $\mathbf{y}=(y_1,...,y_N)$ is a normalized vector.
When $\{\theta_n\}_{n=0}^{\infty}$ reaches a local or global minimum,
the Hessian matrix $\nabla_{\theta}^2L(\xi)$ is semi-positive definite and all $\lambda_i\geq 0$ for all $i=1,...,N$.
Moreover, for the multi-scale solution $u_H(x)$, computational results show that many eigenvalues of $\nabla_{\theta}^2L(\xi)$ are very large (see Figure \ref{fig:heuristic-example}), i.e., stiff during gradient flow dynamics.
As a result, it is very possible that $L(\theta_{n+1})-L(\theta_n)>0$, which implies that the GD method fails to decrease the loss.
A similar computation approach (see the Appendix) shows that the loss decay of IGD is
\bea
  & & L(\theta_{n+1})-L(\theta_n) \nonumber\\
  &=& \alpha \norm{\nabla_{\theta}L(\theta_{n+1})}_2^2
  \left(-1-\f{1}{2}\alpha\sum_{i=1}^{N}\lambda_iy_i^2\right),
\eea
means that the loss will always decay regardless of the stiffness of the gradient flow dynamics of PINNs.
In addition, the linear convergence rate of IGD is strictly proven in Section 3.

\subsection{Implementation of the IGD/ISGD method}
Although the IGD/ISGD method Eq.\eqref{sec-eq1} looks simple and theoretically stable, one difficulty that can not be ignored is the implicity of the nonlinear Eq.\eqref{sec-eq1}.
It can also be expressed as the celebrated proximal point algorithm \citep{BE18,PPA76}:
\be\label{sub-opt}
\theta_{n+1} = arg\min\limits_{\theta} \left\{ \f{1}{2}\norm{\theta-\theta_n}^2 + \alpha\cdot L(\theta) \right\}.
\ee
Hence, when $\alpha$ is sufficiently small,
$\theta_{n+1}$ is approximately close to its previous updates $\theta_n$ with the original loss as a regularizer.
This sub-optimization task requires additional computation and brings difficulties for the whole optimization process.

To reduce the computational burden, we take a practical ``ISGD,L-BFGS" (or ``ISGD,Adam") optimizer for PINNs training with multi-scale solutions.
Here ``ISGD,L-BFGS" means that we first use ISGD with large learning rate for a certain number of iterations, and then switch to L-BFGS with default settings.
In the sub-optimization problem \eqref{sub-opt}, we also apply L-BFGS to compute $\theta_{n+1}$.
The optimizer L-BFGS does not require learning rate, and the neural network is trained until convergence, so the number of iterations is also ignored for L-BFGS \citep{L-BFGS}.
Here, the successful application of L-BFGS in ``ISGD,L-BFGS" optimizer is that both the sub-optimization problem and the subsequent optimization problem have good initial point $\theta_n$, thus are easier for L-BFGS to achieve good convergence properties.
The ``ISGD,Adam" optimizer is to repalce L-BFGS by Adam optimizer with default settings in the ``ISGD,L-BFGS" optimizer when the parameters of PINNs are too large for the quasi-Hessian matrix computation.
The details are illustrated in Algorithm 1.

\begin{algorithm}[tb]
\caption{Practical ``ISGD,Adam" optimization for the loss $L(\theta)$ with stiff solutions}
\label{alg:algorithm}
\textbf{Input}: initial $\theta_0$; ISGD learning rate $\alpha$ and maximum iterations $K_0$; the inner Adam learning rate $\gamma$ and maximum iterations $K_1$ ; the outer Adam learning rate $\eta$ and maximum iterations $K_2$ \\
\textbf{Output}: the optimized $\theta^*$
\begin{algorithmic}[1] 
\STATE Let $n=0$.
\WHILE{$n< K_0$}
\STATE Let $\tilde{\theta}_0 = \theta_n$ and $k=0$.
\WHILE{$k\leq K_1$}
\STATE Update $\tilde{\theta}_{k+1} = \textbf{Adam}\left(\left\{ \f{1}{2}\norm{\tilde{\theta}-\theta_n}^2 + \alpha\cdot L(\tilde{\theta}) \right\}|_{\tilde{\theta}=\tilde{\theta}_k},\gamma\right)$ and $k\leftarrow k+1$
\ENDWHILE
\STATE Update $\theta_{n+1}=\tilde{\theta}_{K_1}$ and $n\leftarrow n+1$
\ENDWHILE
\WHILE{$K_0\leq n< K_0+K_2$}
\STATE Update $\theta_{n+1}=\textbf{Adam}(L(\theta)|_{\theta=\theta_n},\eta)$ and $n\leftarrow n+1$
\ENDWHILE
\STATE Denote $\theta^*=\theta_{K_0+K_2}$
\STATE \textbf{return} the optimized $\theta^*$
\end{algorithmic}
\end{algorithm}

\section{Training dynamics analysis of IGD/ISGD}
In this section, we analyze the neural network training dynamics of our IGD/ISGD method. The technical proofs are given in the Appendix.

\textbf{Quadratic loss. }
We show that randomly initialized IGD method with a constant positive step size
converges to the global minimum at a linear rate.
For simplicity of proof, we demonstrate a two-layer neural network with the quadratic loss functions.
The global convergence property can be extended to an arbitrary $N$-layer neural network with quadratic loss with the technique introduced in \citet{Du19}.
Formally, we consider a neural network of the following form:
\be
u(\W,\mathbf{a},\x) = \f{1}{\sqrt{m}}\sum_{r=1}^{m}a_r\sigma(\w_r^T\x),
\ee
where $\x\in\mathbb{R}^d$ is the input data, $\w_r\in\mathbb{R}^d$ is the weight vector of the first layer,
$a_r \in \mathbb{R}$ is the weight vector of the output layer, and $\sigma(z)$ is the activation function.
We focus on the empirical risk minimization problem with a quadratic loss.
Given a training data set $\{(\x_i,y_i)\}_{i=1}^N$, we minimize
\be\label{l_igd}
L(\W,\mathbf{a}) = \sum_{i=1}^{N}\f{1}{2}|y_i-u(\W,\mathbf{a},\x_i)|^2.
\ee
For simplicity, we fix the second layer and apply the IGD method to optimize the first layer
\be\label{w_igd}
\W(n+1) = \W(n) - \alpha\f{\p L(\W(n+1),\mathbf{a})}{\p \W(n+1)},
\ee
where $\alpha>0$ is the learning rate.

The training dynamics of $u(\W(n),\mathbf{a},\x_i)$ strongly relies on the Gram matrix $\mH(n+1)$ defined by
\be\label{sec3-1}
\mH_{ij}(n+1)  = \sum_{r=1}^{m} \inn{\f{\p u_i(n+1)}{\p\w_r},\f{\p u_j(n+1)}{\p\w_r}},
\ee
and it's limit Gram matrix $\mH^{\infty}$ defined by
\be\label{sec3-3}
\mH^{\infty}_{ij} = \x_i^T\x_j\mathbb{E}_{\w\sim\mathcal{N}(\mathbf{0},\mathbf{I})}
\sigma^{'}(\w^T\x_i)\sigma^{'}(\w^T\x_j).
\ee

The positivity of $\mH^{\infty}$ is the key to prove convergence.
We first state some technical assumptions.
\begin{assumption}\label{ass1}
  The activation function $\sigma(\cdot)$ is smooth, analytic, and is not a polynomial function.
Moreover, both $\sigma(\cdot)$ and its derivatives are Lipschitz continuous, i.e., there exists a constant $C>0$ such that $|\sigma(0)|\leq C$ and for any $z_1,z_2\in\mathbb{R}$,
\beas
|\sigma(z_1)-\sigma(z_2)|\leq c|z_1-z_2|,\\
|\sigma^{'}(z_1)-\sigma^{'}(z_2)|\leq c|z_1-z_2|.
\eeas
\end{assumption}
Here and below, we use the same constant $C$ without confusion for simplicity to represent different constants independent of $m,N,\lambda_{\min}(\mH^{\infty})$.

\begin{assumption}\label{ass2}
  No two input data are parallel, i.e., for any $i\neq j$, we need $\x_i\neq c\x_j$ for any constant $c$.
\end{assumption}

Now we present our main theorem. The proofs in detail can be found in the Appendix.
\begin{theorem}\label{thm3}
Assume Assumption \ref{ass1} and Assumption \ref{ass2} hold and for all $i\in[N]$, $\norm{\x_i}_2\leq C$, $|y_i|\leq C$,
and the hidden numbers
$m\geq\max\left\{\f{16CN^2}{\lambda_{\min}(\mH^{\infty})^2}\log(\f{2N}{\delta}),
\f{16C^2N^4}{\lambda_{\min}(\mH^{\infty})^4}\right\}$,
and the learning rate $\alpha\leq\f{C\lambda_{\min}(\mH^{\infty})}{N^2}$ for some constant $C$,
and we i.i.d. initialize $\w_r\sim\mathcal{N}(\mathbf{0},\mathbf{I})$, $a_r\sim unif|{-1,1}|$ for $r\in[m]$, then with probability $1-\delta$ we have for $n=0,1,2,...$
\be\label{th3-eq1}
L(n) \leq \left(\f{1}{1+\f{\alpha\lambda_{\min}(\mH^{\infty})}{2}}\right)^n L(0).
\ee
where the quadratic loss $L(n)\doteq L(\W(n),\mathbf{a})$ is defined by Eq.\eqref{l_igd}.
\end{theorem}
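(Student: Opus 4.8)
\emph{Proof proposal.} The plan is to adapt the overparameterization / Gram matrix argument of \citet{Du19} to the implicit update \eqref{w_igd}; the new difficulty is that $\W(n+1)$ — hence the Gram matrix $\mH(n+1)$ of \eqref{sec3-1} and the residual $\mathbf{u}(n+1)-\mathbf{y}$ — occurs on both sides of every identity. Here $\mathbf{u}(n)=(u(\W(n),\mathbf{a},\x_1),\dots,u(\W(n),\mathbf{a},\x_N))^T$, so that $L(n)=\f12\norm{\mathbf{u}(n)-\mathbf{y}}_2^2$, and Assumptions \ref{ass1}--\ref{ass2} guarantee $\lambda_{\min}(\mH^{\infty})>0$, used throughout. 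As a preliminary I would check that the implicit step is well posed: for $\alpha\le C\lambda_{\min}(\mH^{\infty})/N^2$ the map $\W\mapsto\W(n)-\alpha\,\p L(\W,\mathbf{a})/\p\W$ is a contraction on a small ball around $\W(n)$ (since $\p L/\p\W$ is Lipschitz there, by Assumption \ref{ass1} and $\norm{\x_i}_2\le C$, $|y_i|\le C$), so \eqref{w_igd} --- equivalently the proximal problem \eqref{sub-opt} --- has a unique solution there obeying the a priori bound $\norm{\w_r(n+1)-\w_r(n)}_2\le \f{C\alpha\sqrt N}{\sqrt m}\norm{\mathbf{u}(n+1)-\mathbf{y}}_2$, which will be used repeatedly.

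\textbf{Initialization.} The entry $\mH_{ij}(0)=\f1m\sum_{r=1}^{m}\sigma'(\w_r(0)^T\x_i)\sigma'(\w_r(0)^T\x_j)\x_i^T\x_j$ is an average of $m$ i.i.d.\ terms with mean $\mH^{\infty}_{ij}$; a Hoeffding-type bound plus a union bound over the $N^2$ entries, under $m\ge \f{16CN^2}{\lambda_{\min}(\mH^{\infty})^2}\log\!\big(\f{2N}{\delta}\big)$, give $\norm{\mH(0)-\mH^{\infty}}_2\le\f14\lambda_{\min}(\mH^{\infty})$, hence $\lambda_{\min}(\mH(0))\ge\f34\lambda_{\min}(\mH^{\infty})$, with probability $1-\delta$; on the same event $L(0)=\cO(N)$, i.e.\ $\norm{\mathbf{u}(0)-\mathbf{y}}_2=\cO(\sqrt N)$.

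\textbf{One-step identity and coupled induction.} By the mean value theorem $u_i(n+1)-u_i(n)=\f1{\sqrt m}\sum_r a_r\sigma'(\eta_{r,i})\x_i^T(\w_r(n+1)-\w_r(n))$ for some $\eta_{r,i}$ between $\w_r(n)^T\x_i$ and $\w_r(n+1)^T\x_i$; substituting \eqref{w_igd} and using $a_r^2=1$,
\[
(\mathbf{I}+\alpha\mH(n+1))(\mathbf{u}(n+1)-\mathbf{y})=(\mathbf{u}(n)-\mathbf{y})+\mathbf{e}(n),
\]
where $\mathbf{e}(n)$ collects the terms produced when $\sigma'(\eta_{r,i})$ is replaced by $\sigma'(\w_r(n+1)^T\x_i)$ to form the symmetric matrix \eqref{sec3-1}; Lipschitzness of $\sigma'$, the a priori step bound, and $\norm{\x_i}_2\le C$ give $\norm{\mathbf{e}(n)}_2\le\f{C\alpha^2N^{3/2}}{\sqrt m}\norm{\mathbf{u}(n+1)-\mathbf{y}}_2^2$. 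I would then run one induction on $n$ carrying, for all $k\le n$: (i) $\norm{\w_r(k)-\w_r(0)}_2\le R:=\f{CN}{\sqrt m\,\lambda_{\min}(\mH^{\infty})}$ for all $r$; (ii) $\lambda_{\min}(\mH(k))\ge\f12\lambda_{\min}(\mH^{\infty})$; (iii) $L(k)\le\big(1+\f12\alpha\lambda_{\min}(\mH^{\infty})\big)^{-k}L(0)$. From (i), Lipschitzness of $\sigma'$ gives $\norm{\mH(n+1)-\mH(0)}_2\le CNR\le\f14\lambda_{\min}(\mH^{\infty})$, which with Initialization yields (ii) at $n+1$; taking norms in the displayed identity, using (ii) at $n+1$ and $\norm{\mathbf{e}(n)}_2\le\f14\alpha\lambda_{\min}(\mH^{\infty})\norm{\mathbf{u}(n+1)-\mathbf{y}}_2$ (valid because by (iii) the residual never exceeds $\norm{\mathbf{u}(0)-\mathbf{y}}_2=\cO(\sqrt N)$ and $\alpha,m$ are as assumed) gives $\norm{\mathbf{u}(n+1)-\mathbf{y}}_2\le\big(1+\f12\alpha\lambda_{\min}(\mH^{\infty})\big)^{-1}\norm{\mathbf{u}(n)-\mathbf{y}}_2$, i.e.\ (iii) at $n+1$ (squaring only improves the rate); finally, summing the per-step displacement bound against the geometric decay of (iii), $\norm{\w_r(n+1)-\w_r(0)}_2\le\f{C\alpha\sqrt N}{\sqrt m}\sum_{j\ge0}\big(1+\f12\alpha\lambda_{\min}(\mH^{\infty})\big)^{-(j+1)/2}\,\cO(\sqrt N)\le R$, which is (i) at $n+1$. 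The hypothesis $m\ge16C^2N^4/\lambda_{\min}(\mH^{\infty})^4$ is exactly what makes $CNR\le\f14\lambda_{\min}(\mH^{\infty})$ and renders the $\mathbf{e}(n)$ contribution negligible; statement (iii) with $k=n$ is the claim \eqref{th3-eq1}.

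\textbf{Main obstacle.} The delicate point is the implicitness: in the one-step identity both $\mathbf{e}(n)$ and $\mH(n+1)$ are evaluated at the \emph{unknown} $\W(n+1)$, so $\mathbf{e}(n)$ cannot be bounded against the old residual as in the explicit-GD proof. The remedy is to keep $\mathbf{e}(n)$ on the same side as $(\mathbf{I}+\alpha\mH(n+1))(\mathbf{u}(n+1)-\mathbf{y})$ and absorb it into the spectral lower bound $1+\alpha\lambda_{\min}(\mH(n+1))$, but this is legitimate only once a trajectory-independent bound on $\norm{\mathbf{u}(n+1)-\mathbf{y}}_2$ is available --- which is itself part of the conclusion. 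Hence well-posedness of the implicit step, positivity of $\mH(n+1)$, the displacement bound, and the loss decay must all be proved simultaneously inside the single induction, and the bookkeeping needed to make $\alpha\lambda_{\min}(\mH(n+1))-\f{C\alpha^2N^{3/2}}{\sqrt m}\norm{\mathbf{u}(n+1)-\mathbf{y}}_2\ge\f12\alpha\lambda_{\min}(\mH^{\infty})$ hold under the stated bounds on $m$ and $\alpha$ is the main calculation.
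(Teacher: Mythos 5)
Your proposal follows essentially the same route as the paper's proof: the one-step identity $(\mathbf{I}+\alpha\mH(n+1))(\u(n+1)-\y)=(\u(n)-\y)+\mathbf{e}(n)$ (the paper's $\mathbf{I}_0(k+1)$ plays the role of your $\mathbf{e}(n)$), Hoeffding concentration of $\mH(0)$ around $\mH^{\infty}$, and a single induction coupling weight displacement, positivity of $\mH(k)$, and geometric loss decay, with the implicit error term absorbed into the spectral lower bound exactly as in the paper's Lemmas 3--5. The only substantive addition is your preliminary well-posedness/contraction argument for the implicit step, a point the paper leaves implicit; otherwise the two arguments coincide.
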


\textbf{PINN loss. }
For the PINN loss Eq.\eqref{pinn-loss1},
it has been observed that the Gram matrix $\mH^{\infty}$ may not guarantee strict positivity
\citep[see][Figure 1]{Wang20b}, and the proof technique may fail.
However, as demonstrated in the next section, the convergence and strong stability of IGD/ISGD for training PINNs are numerically verified.

\section{Computational Results}
In this section, we compare the performance of SGD optimizer, Adam optimizer and our ISGD optimizer in training PINNs to solve different differential equations.
The hyper-parameters used in the three optimizers are listed in Table 1.
We note \#Iterations = ($K_0\cdot K_1+K_2)\cdot batchs$, where $K_0,K_1,K_2$ are hyper-parameters in Algorithm 1.
The wall-clock computational time is proportional to \#Iterations, so the computational time is comparable for three optimizers in all numerical examples.
More computational results are given in the Appendix.
\begin{table}[tbp]
\centering
\begin{tabular}{c|ccc}\hline
 Example & Optimizer & Learning rate & \#Iterations \\\hline
 4.1 & SGD(Adam) & 0.001 & 120,000 \\\cline{2-4}
 ($\epsilon=2$) & ISGD, Adam & 0.5, 0.001 & 102,000 \\\hline
 4.1 & SGD(Adam) & 0.001 & 400,000 \\\cline{2-4}
 ($\epsilon=0.01$) & ISGD, Adam & 0.5, 0.001 & 360,000 \\\hline
 \multirow{3}*{4.2} & SGD(Adam) & 0.0005 & 2,000,000 \\\cline{2-4}
  & ISGD, Adam & 0.5, 0.0005 & 1,100,000 \\\hline
 \multirow{3}*{4.3} & SGD(Adam) & 0.0005 & 1,000,000 \\\cline{2-4}
  & ISGD, Adam & 0.5, 0.0005 & 550,000 \\\hline
\end{tabular}
\caption{Hyper-parameters used in the three optimizers for the following 3 examples. ``SGD(Adam)" represents SGD shares the same hyper-parameters with Adam. ``ISGD, Adam" is referred in Algorithm 1.}
\end{table}

\subsection{PINN for ordinary differential equations}
\begin{figure*}[tbp]
   \centering
     \subfloat[learning rate = 0.001]{\includegraphics[width=0.28\textwidth]{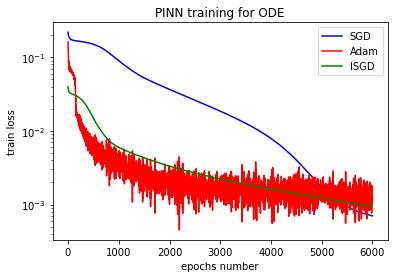}}
     \subfloat[learning rate = 0.5]{\includegraphics[width=0.28\textwidth]{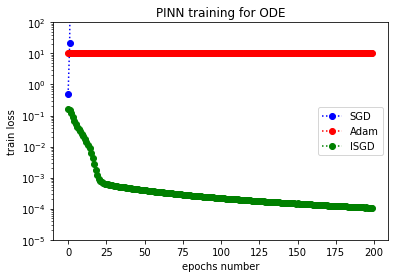}}
     \subfloat[learning rate = 0.5]{\includegraphics[width=0.28\textwidth]{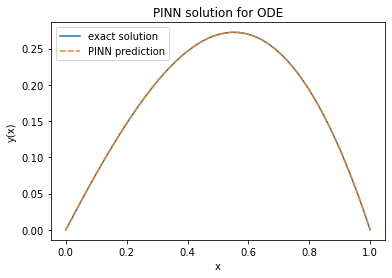}}\\
     \subfloat[learning rate = 0.001]{\includegraphics[width=0.28\textwidth]{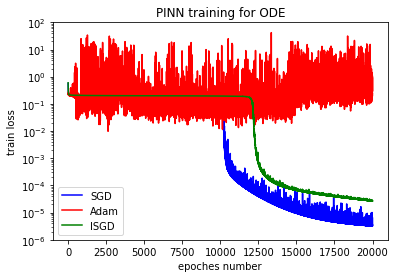}}
     \subfloat[learning rate = 0.5]{\includegraphics[width=0.28\textwidth]{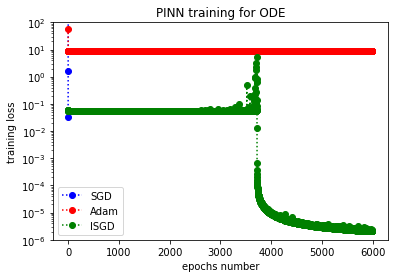}}
     \subfloat[learning rate = 0.5]{\includegraphics[width=0.28\textwidth]{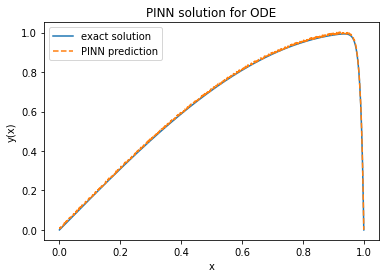}}
     \caption{The optimization training results for ODE Eq.\eqref{ex2-eq1}.
     Top row: $\epsilon=2.0$. Down row: $\epsilon=0.01$. }\label{fig:ex2-1}
\end{figure*}
Singularly perturbed ordinary differential equations have been successfully applied to
many fields including gas dynamics, chemical reaction, fluid mechanics, elasticity, etc.
To find the solution is a hot and difficult problem because it contains a very small parameter $\epsilon$.
We consider the second-order linear singularly perturbed boundary value differential equation
\bea\label{ex2-eq1}
\left\{
  \begin{array}{l}
  -\epsilon y^{''}(x) + y^{'}(x) = f(x), \quad x\in(0,1), \\
  y(0)=0,\quad y(1)=0.
  \end{array}
\right.
\eea
The true solution is chosen as $y(x)=\frac{1-e^{\frac{x}{\epsilon}}}{e^{\frac{1}{\epsilon}}-1} + \sin(\frac{\pi}{2}x)$, and $f(x)$ is given according to Eq.\eqref{ex2-eq1}.
$\epsilon>0$ is a constant; when $\epsilon$ is very small, a boundary layer exists near the boundary $x=1$. Let $y_{\theta}(x)$ be the neural network approximation of $y(x)$, then the PINN loss function can be defined as
\beas
L(\theta) &=& \frac{1}{2}\left[|y_{\theta}(0)-y(0)|^2+|y_{\theta}(1)-y(1)|^2 \right] \\
 &&+ \frac{1}{N}\sum_{i=1}^{N}\left|-\epsilon y_{\theta}^{''}(x_i) + y_{\theta}^{'}(x_i) - f(x_i)\right|^2.
\eeas

We choose $N=400$ randomly sampled points to compute the loss function, a batch size of 40 for a small learning rate $\alpha=0.001$, and a full batch size for a large learning rate $\alpha=0.5$.
A neural network with 4 hidden layers, every 50 units with \emph{tanh} activations, is applied in all the computations.
The results are shown in Figure \ref{fig:ex2-1}.
For the case $\epsilon=2$, the true solution is smooth.
As shown in Fig. \ref{fig:ex2-1}(a)(b), we find that the ISGD optimizer can significantly improve training convergence and remain stable for different learning rates.
For the case $\epsilon=0.01$, as shown in Fig. \ref{fig:ex2-1}(f), the true solution has a boundary layer near $x=1$, and the large gradient creates difficulties for the optimizers.
As shown in Fig. \ref{fig:ex2-1}(d)(e), more epochs and a smaller learning rate are required to be convergent for this singularity phenomenon. While the SGD and Adam optimizers are not convergent for large learning rates, the ISGD can still have stable convergent results, demonstrating the robustness of the proposed method.

\subsection{PINN for Poisson equation}
\begin{figure}[tbp]
   \centering
     \subfloat[learning rate = 0.0005]{\includegraphics[width=0.45\columnwidth]{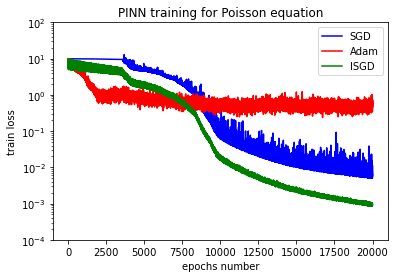}}
     \subfloat[learning rate = 0.5]{\includegraphics[width=0.45\columnwidth]{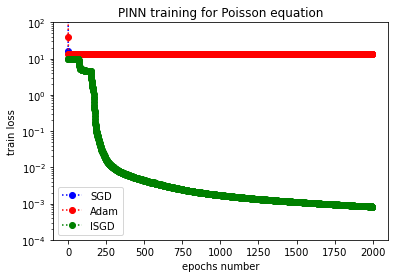}}\\
     \subfloat[prediction $u_{\theta}(x,y)$]{\includegraphics[width=0.45\columnwidth]{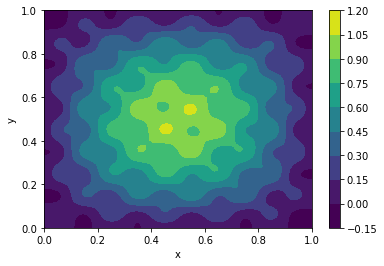}}\
     \subfloat[error $|u(x,y)-u_{\theta}(x,y)|$]{\includegraphics[width=0.45\columnwidth]{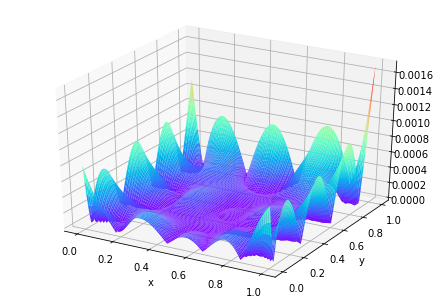}}
     \caption{PINN training for the Poisson equation \eqref{ex2-eq2}.}\label{fig:ex2-3}
\end{figure}
Poisson equation is an elliptic partial differential equation of broad utility in theoretical physics.
We consider the Poisson equation on the domain $\Og=[0,1]\times[0,1]$
\bea\label{ex2-eq2}
\left\{
  \begin{array}{l}
  -\frac{\p^2u}{\p x^2}-\frac{\p^2u}{\p y^2} = f(x,y),\quad (x,y)\in\Og, \\
  u(x,y) = 0, \quad (x,y)\in\p\Og.
  \end{array}
\right.
\eea
The true solution is chosen as $u(x,y)=\sin(\pi x)\sin(\pi y)+0.1\sin(10\pi x)\sin(10\pi y)$ with multi-scale features.
The PINN loss function is defined as
\beas
L(\theta)  = \frac{1}{N_b}\sum_{i=1}^{N_b}\left|u_{\theta}(x_i,y_i)-u(x_i,y_i) \right|^2 + \\ \frac{1}{N_f}\sum_{j=1}^{N_f}\left|\frac{\p^2u_{\theta}(x_j,y_j)}{\p x^2}+\frac{\p^2u_{\theta}(x_j,y_j)}{\p y^2} - f(x_j,y_j) \right|^2.
\eeas
We choose $N_b=400$ randomly sampled points on $\p\Og$,
and $N_f=4,000$ randomly sampled points in $\Og$ to compute the loss function.
A neural network with 6 hidden layers, every 100 units with \emph{tanh} activations,
is applied in all the computations.
The three optimizer training results for $\alpha=0.0005$ and $0.5$ are shown in Fig. \ref{fig:ex2-3}(a) and Fig. \ref{fig:ex2-3}(b), respectively.
We see that neither SGD nor Adam can train well as learning rate increases, but our ISGD trains well for different values of $\alpha$.
The PINN prediction is plotted in Fig. \ref{fig:ex2-3}(c),
and the absolute error is shown in Fig. \ref{fig:ex2-3}(d), with an absolute error less than $0.2\%$.
We see that the PINN trained by the ISGD optimizer can obtain stable and accurate results for the Poisson equation \eqref{ex2-eq2}.

\subsection{PINN for Helmholtz equation}
\begin{figure}[tbp]
   \centering
     \subfloat[learning rate = 0.0005]{\includegraphics[width=0.45\columnwidth]{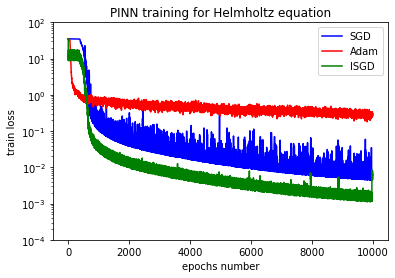}}
     \subfloat[learning rate = 0.5]{\includegraphics[width=0.45\columnwidth]{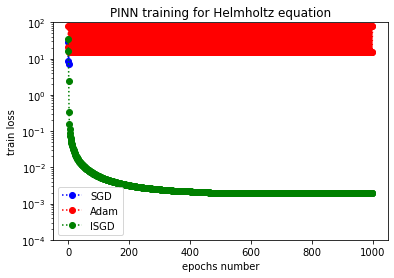}}\\
     \subfloat[prediction $u_{\theta}(x,y)$]{\includegraphics[width=0.45\columnwidth]{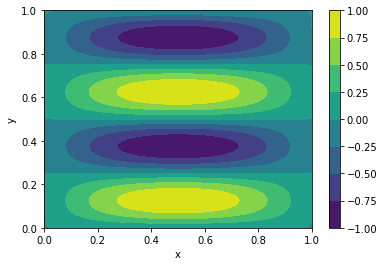}}\
     \subfloat[error $|u(x,y)-u_{\theta}(x,y)|$]{\includegraphics[width=0.45\columnwidth]{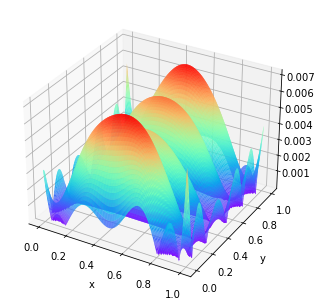}}
     \caption{PINN training for Helmholtz equation \eqref{ex2-eq3}.}\label{fig:ex2-4}
\end{figure}
The Helmholtz equation is one of the fundamental equations of mathematical physics
arising in many physical problems, such as vibrating membranes, acoustics, and electromagnetism
equations.
We solve the two-dimensional Helmholtz equation given by
\bea\label{ex2-eq3}
\left\{
  \begin{array}{l}
  \frac{\p^2u}{\p x^2}+\frac{\p^2u}{\p y^2} +k^2u(x,y) = f(x,y),\quad (x,y)\in\Og, \\
  u(x,y) = 0, \quad (x,y)\in\p\Og.
  \end{array}
\right.
\eea
The exact solution for $k=4$ is $u(x,y)=\sin(\pi x)\sin(4\pi y)$, and the force term $f(x,y)$ is given by the Eq.\eqref{ex2-eq3}.
We choose $N_b=400$ randomly sampled points on $\p\Og$,
and $N_f=4,000$ randomly sampled points in $\Og$ to compute the loss function.
A neural network with 6 hidden layers, every 100 units with \emph{tanh} activations,
is applied in all the computations.
The three optimizer training results for $\alpha=0.0005$ and $0.5$ are shown in Fig. \ref{fig:ex2-4}(a) and Fig. \ref{fig:ex2-4}(b), respectively.
The PINN solution is plotted in Fig. \ref{fig:ex2-4}(c),
and the absolute error is shown in Fig. \ref{fig:ex2-4}(d), with an absolute error less than $0.7\%$.
We see that the PINN trained by the ISGD optimizer can obtain stable and accurate results for the Helmholtz equation \eqref{ex2-eq3}.

\section{Conclusion}
To overcome the numerical instability of traditional gradient descent methods to some key hyper-parameters,
a stable IGD/ISGD method was proposed, analyzed and tested in this paper.
The IGD/ISGD method includes implicit updates, and the L-BFGS or Adam optimizer can be combined to forward the updates.
The global convergence of IGD/ISGD are theoretically analyzed and proven.
We apply the IGD/ISGD method to train deep as well as physics-informed
neural networks, showing that the IGD/ISGD method can effectively deal with stiffness phenomenon in the training dynamics via gradient descent.
The techniques proposed in this paper stabilize the training of neural network models.
This may result in making it easier for non-experts to train such models for beneficial applications, such as solving PDEs.

\section*{Acknowledgments}
The first author is supported by the National Natural Science Foundation of China (No.62106103), Fundamental Research Funds for the Central Universities (No.ILA22023) and 173 Program Technical Field Fund (No.2021-JCJQ-JJ-0018).

\bibliography{isgd_aaai23}

\begin{thebibliography}{37}
\providecommand{\natexlab}[1]{#1}

\bibitem[{Allen-Zhu, Li, and Song(2019)}]{ALS19}
Allen-Zhu, Z.; Li, Y.; and Song, Z. 2019.
\newblock A convergence theory for deep learning via over-parameterization.
\newblock In \emph{International Conference on Machine Learning}, 242--252.
  PMLR.

\bibitem[{Butcher(2016)}]{butcher2016numerical}
Butcher, J.~C. 2016.
\newblock \emph{Numerical methods for ordinary differential equations}.
\newblock John Wiley \& Sons.

\bibitem[{Cai et~al.(2021)Cai, Wang, Wang, Perdikaris, and Karniadakis}]{CWW21}
Cai, S.; Wang, Z.; Wang, S.; Perdikaris, P.; and Karniadakis, G.~E. 2021.
\newblock Physics-informed neural networks for heat transfer problems.
\newblock \emph{Journal of Heat Transfer}, 143(6): 060801.

\bibitem[{Chen et~al.(2020)Chen, Lu, Karniadakis, and Dal~Negro}]{CLK20}
Chen, Y.; Lu, L.; Karniadakis, G.~E.; and Dal~Negro, L. 2020.
\newblock Physics-informed neural networks for inverse problems in nano-optics
  and metamaterials.
\newblock \emph{Optics Express}, 28(8): 11618--11633.

\bibitem[{Du(2019)}]{Du19D}
Du, S.~S. 2019.
\newblock \emph{Gradient descent for non-convex problems in modern machine
  learning}.
\newblock Ph.D. thesis, Carnegie Mellon University.

\bibitem[{Du et~al.(2019)Du, Lee, Li, Wang, and Zhai}]{Du19}
Du, S.~S.; Lee, J.; Li, H.; Wang, L.; and Zhai, X. 2019.
\newblock Gradient descent finds global minima of deep neural networks.
\newblock In \emph{International Conference on Machine Learning}, 1675--1685.
  PMLR.

\bibitem[{Du et~al.(2018)Du, Zhai, Poczos, and Singh}]{Du18}
Du, S.~S.; Zhai, X.; Poczos, B.; and Singh, A. 2018.
\newblock Gradient descent provably optimizes over-parameterized neural
  networks.
\newblock In \emph{International Conference on Learning Representations}.

\bibitem[{Duchi, Hazan, and Singer(2011)}]{Adagrad}
Duchi, J.; Hazan, E.; and Singer, Y. 2011.
\newblock Adaptive subgradient methods for online learning and stochastic
  optimization.
\newblock \emph{Journal of Machine Learning Research}, 12(7): 2121--2159.

\bibitem[{Fang and Zhan(2019)}]{FZ19}
Fang, Z.; and Zhan, J. 2019.
\newblock Deep physical informed neural networks for metamaterial design.
\newblock \emph{IEEE Access}, 8: 24506--24513.

\bibitem[{Hennigh et~al.(2021)Hennigh, Narasimhan, Nabian, Subramaniam,
  Tangsali, Fang, Rietmann, Byeon, and Choudhry}]{SimNet20}
Hennigh, O.; Narasimhan, S.; Nabian, M.~A.; Subramaniam, A.; Tangsali, K.;
  Fang, Z.; Rietmann, M.; Byeon, W.; and Choudhry, S. 2021.
\newblock NVIDIA SimNet™: An AI-Accelerated Multi-Physics Simulation
  Framework.
\newblock In \emph{International Conference on Computational Science},
  447--461. Springer.

\bibitem[{Jagtap, Kawaguchi, and Em~Karniadakis(2020)}]{JKK20b}
Jagtap, A.~D.; Kawaguchi, K.; and Em~Karniadakis, G. 2020.
\newblock Locally adaptive activation functions with slope recovery for deep
  and physics-informed neural networks.
\newblock \emph{Proceedings of the Royal Society A}, 476(2239): 20200334.

\bibitem[{Jagtap, Kawaguchi, and Karniadakis(2020)}]{JKK20a}
Jagtap, A.~D.; Kawaguchi, K.; and Karniadakis, G.~E. 2020.
\newblock Adaptive activation functions accelerate convergence in deep and
  physics-informed neural networks.
\newblock \emph{Journal of Computational Physics}, 404: 109136.

\bibitem[{Jagtap et~al.(2022)Jagtap, Shin, Kawaguchi, and Karniadakis}]{JSKK}
Jagtap, A.~D.; Shin, Y.; Kawaguchi, K.; and Karniadakis, G.~E. 2022.
\newblock Deep Kronecker neural networks: A general framework for neural
  networks with adaptive activation functions.
\newblock \emph{Neurocomputing}, 468: 165--180.

\bibitem[{Jin et~al.(2021)Jin, Cai, Li, and Karniadakis}]{JCLK21}
Jin, X.; Cai, S.; Li, H.; and Karniadakis, G.~E. 2021.
\newblock NSFnets (Navier-Stokes flow nets): Physics-informed neural networks
  for the incompressible Navier-Stokes equations.
\newblock \emph{Journal of Computational Physics}, 426: 109951.

\bibitem[{Kingma and Ba(2014)}]{Adam}
Kingma, D.~P.; and Ba, J. 2014.
\newblock Adam: A method for stochastic optimization.
\newblock \emph{arXiv preprint arXiv:1412.6980}.

\bibitem[{Lagaris, Likas, and Fotiadis(1998)}]{LLF98}
Lagaris, I.~E.; Likas, A.; and Fotiadis, D.~I. 1998.
\newblock Artificial neural networks for solving ordinary and partial
  differential equations.
\newblock \emph{IEEE Transactions on Neural Networks}, 9(5): 987--1000.

\bibitem[{Li, He, and Lin(2020)}]{BE20b}
Li, M.; He, L.; and Lin, Z. 2020.
\newblock Implicit euler skip connections: Enhancing adversarial robustness via
  numerical stability.
\newblock In \emph{International Conference on Machine Learning}, 5874--5883.
  PMLR.

\bibitem[{Liu and Nocedal(1989)}]{L-BFGS}
Liu, D.~C.; and Nocedal, J. 1989.
\newblock On the limited memory BFGS method for large scale optimization.
\newblock \emph{Mathematical programming}, 45(1): 503--528.

\bibitem[{Lu et~al.(2021)Lu, Meng, Mao, and Karniadakis}]{DeepXDE21}
Lu, L.; Meng, X.; Mao, Z.; and Karniadakis, G.~E. 2021.
\newblock DeepXDE: A deep learning library for solving differential equations.
\newblock \emph{SIAM Review}, 63(1): 208--228.

\bibitem[{Mao, Jagtap, and Karniadakis(2020)}]{MJK20}
Mao, Z.; Jagtap, A.~D.; and Karniadakis, G.~E. 2020.
\newblock Physics-informed neural networks for high-speed flows.
\newblock \emph{Computer Methods in Applied Mechanics and Engineering}, 360:
  112789.

\bibitem[{Psichogios and Ungar(1992)}]{PU92}
Psichogios, D.~C.; and Ungar, L.~H. 1992.
\newblock A hybrid neural network-first principles approach to process
  modeling.
\newblock \emph{AIChE Journal}, 38(10): 1499--1511.

\bibitem[{Rahaman et~al.(2019)Rahaman, Baratin, Arpit, Draxler, Lin, Hamprecht,
  Bengio, and Courville}]{RBA19}
Rahaman, N.; Baratin, A.; Arpit, D.; Draxler, F.; Lin, M.; Hamprecht, F.;
  Bengio, Y.; and Courville, A. 2019.
\newblock On the spectral bias of neural networks.
\newblock In \emph{International Conference on Machine Learning}, 5301--5310.
  PMLR.

\bibitem[{Raissi, Perdikaris, and Karniadakis(2019)}]{PINN19}
Raissi, M.; Perdikaris, P.; and Karniadakis, G.~E. 2019.
\newblock Physics-informed neural networks: A deep learning framework for
  solving forward and inverse problems involving nonlinear partial differential
  equations.
\newblock \emph{Journal of Computational Physics}, 378: 686--707.

\bibitem[{Raissi, Yazdani, and Karniadakis(2020)}]{RYK20}
Raissi, M.; Yazdani, A.; and Karniadakis, G.~E. 2020.
\newblock Hidden fluid mechanics: Learning velocity and pressure fields from
  flow visualizations.
\newblock \emph{Science}, 367(6481): 1026--1030.

\bibitem[{Rockafellar(1976)}]{PPA76}
Rockafellar, R.~T. 1976.
\newblock Monotone operators and the proximal point algorithm.
\newblock \emph{SIAM Journal on Control and Optimization}, 14(5): 877--898.

\bibitem[{Sahli~Costabal et~al.(2020)Sahli~Costabal, Yang, Perdikaris, Hurtado,
  and Kuhl}]{SYP20}
Sahli~Costabal, F.; Yang, Y.; Perdikaris, P.; Hurtado, D.~E.; and Kuhl, E.
  2020.
\newblock Physics-informed neural networks for cardiac activation mapping.
\newblock \emph{Frontiers in Physics}, 8: 42.

\bibitem[{Satpathi and Srikant(2021)}]{SS21}
Satpathi, S.; and Srikant, R. 2021.
\newblock The Dynamics of Gradient Descent for Overparametrized Neural
  Networks.
\newblock In \emph{Learning for Dynamics and Control}, 373--384. PMLR.

\bibitem[{Tieleman and Hinton(2012)}]{RMSProp}
Tieleman, T.; and Hinton, G. 2012.
\newblock Lecture 6.5-rmsprop, coursera: Neural networks for machine learning.
\newblock \emph{University of Toronto, Technical Report}.

\bibitem[{Toulis, Airoldi, and Rennie(2014)}]{Toulis14}
Toulis, P.; Airoldi, E.; and Rennie, J. 2014.
\newblock Statistical analysis of stochastic gradient methods for generalized
  linear models.
\newblock In \emph{International Conference on Machine Learning}, 667--675.
  PMLR.

\bibitem[{Toulis and Airoldi(2017)}]{ToulisAiroldi17}
Toulis, P.; and Airoldi, E.~M. 2017.
\newblock Asymptotic and finite-sample properties of estimators based on
  stochastic gradients.
\newblock \emph{Annals of Statistics}, 45(4): 1694--1727.

\bibitem[{Toulis, Tran, and Airoldi(2016)}]{Toulis16}
Toulis, P.; Tran, D.; and Airoldi, E. 2016.
\newblock Towards stability and optimality in stochastic gradient descent.
\newblock In \emph{Artificial Intelligence and Statistics}, 1290--1298. PMLR.

\bibitem[{Vo, Hong, and Jung(2020)}]{BE20a}
Vo, N.~D.; Hong, M.; and Jung, J.~J. 2020.
\newblock Implicit stochastic gradient descent method for cross-domain
  recommendation system.
\newblock \emph{Sensors}, 20(9): 2510.

\bibitem[{Wang, Teng, and Perdikaris(2021)}]{Wang20a}
Wang, S.; Teng, Y.; and Perdikaris, P. 2021.
\newblock Understanding and mitigating gradient flow pathologies in
  physics-informed neural networks.
\newblock \emph{SIAM Journal on Scientific Computing}, 43(5): A3055--A3081.

\bibitem[{Wang, Wang, and Perdikaris(2021)}]{Wang20c}
Wang, S.; Wang, H.; and Perdikaris, P. 2021.
\newblock On the eigenvector bias of Fourier feature networks: From regression
  to solving multi-scale PDEs with physics-informed neural networks.
\newblock \emph{Computer Methods in Applied Mechanics and Engineering}, 384:
  113938.

\bibitem[{Wang, Yu, and Perdikaris(2022)}]{Wang20b}
Wang, S.; Yu, X.; and Perdikaris, P. 2022.
\newblock When and why PINNs fail to train: A neural tangent kernel
  perspective.
\newblock \emph{Journal of Computational Physics}, 449: 110768.

\bibitem[{Yin et~al.(2018)Yin, Pham, Oberman, and Osher}]{BE18}
Yin, P.; Pham, M.; Oberman, A.; and Osher, S. 2018.
\newblock Stochastic backward Euler: an implicit gradient descent algorithm for
  k-means clustering.
\newblock \emph{Journal of Scientific Computing}, 77(2): 1133--1146.

\bibitem[{Zou et~al.(2020)Zou, Cao, Zhou, and Gu}]{ZCZ20}
Zou, D.; Cao, Y.; Zhou, D.; and Gu, Q. 2020.
\newblock Gradient descent optimizes over-parameterized deep ReLU networks.
\newblock \emph{Machine Learning}, 109(3): 467--492.

\end{thebibliography}

\newpage
\appendix
\onecolumn
\section{IGD overcomes stiffness in the gradient flow dynamics}
The following analysis reveals how IGD can overcome the stiffness in the gradient flow dynamics of PINNs.
Suppose that at $n$-th step of IGD during training of minimizing the loss $L(\theta)$, we have
\begin{equation}\label{app:c1}
  \theta_{n+1} = \theta_{n} - \alpha \cdot \nabla_{\theta} L(\theta_{n+1}),
\end{equation}
or equally
\begin{equation}\label{app:c2}
  \theta_{n} = \theta_{n+1} + \alpha \cdot \nabla_{\theta} L(\theta_{n+1}),
\end{equation}
where $\alpha$ is the learning rate.
We will show that $L(\theta_{n+1})-L(\theta_n)<0$ regardless of the stiffness in the gradient flow dynamics of PINNs, provided that $\theta_n$ approaches a local or global minimum of $L(\theta)$.

Applying second order Taylor expansion to the loss function $L(\theta)$ at $\theta_{n+1}$ gives
\be\label{app:c3}
L(\theta_n) = L(\theta_{n+1}) + (\theta_n-\theta_{n+1})\cdot \nabla_{\theta} L(\theta_{n+1})
+\frac{1}{2}(\theta_n-\theta_{n+1})^T \nabla_{\theta}^2 L(\xi) (\theta_n-\theta_{n+1}),
\ee
where $\xi=t\theta_{n+1}+(1-t)\theta_n$ for some $t\in(0,1)$, and $\nabla_{\theta}^2 L(\xi)$ is the Hessian matrix of the loss function $L(\theta)$ evaluated at the point $\theta=\xi$.
Now applying \eqref{app:c2} to \eqref{app:c3}, we obtain
\beas
L(\theta_{n+1}) - L(\theta_n) = -\alpha\norm{\nabla_{\theta} L(\theta_{n+1})}_2^2 -\frac{1}{2}\alpha^2\nabla_{\theta} L(\theta_{n+1})^T\nabla_{\theta}^2 L(\xi)\nabla_{\theta} L(\theta_{n+1}).
\eeas
Denote the normalized vector $x=\frac{\nabla_{\theta} L(\theta_{n+1})}{\norm{\nabla_{\theta} L(\theta_{n+1})}}$, and $Q$ is an orthogonal matrix diagonalizing $\nabla_{\theta}^2 L(\xi)$ with $\nabla_{\theta}^2 L(\xi)=Q^T\text{diag}\{\lambda_1,\lambda_2,...,\lambda_N\}Q$, so $y=Qx$ is a normalized vector and we obtain
\bea
\nabla_{\theta} L(\theta_{n+1})^T\nabla_{\theta}^2 L(\xi)\nabla_{\theta} L(\theta_{n+1}) &=&
\norm{\nabla_{\theta} L(\theta_{n+1})}_2^2\cdot \frac{\nabla_{\theta} L(\theta_{n+1})^T}{\norm{\nabla_{\theta} L(\theta_{n+1})}}\nabla_{\theta}^2 L(\xi)\frac{\nabla_{\theta} L(\theta_{n+1})}{\norm{\nabla_{\theta} L(\theta_{n+1})}} \nonumber\\
&=& \norm{\nabla_{\theta} L(\theta_{n+1})}_2^2\cdot x^TQ^T\text{diag}\{\lambda_1,\lambda_2,...,\lambda_N\}Qx \nonumber\\
&=& \norm{\nabla_{\theta} L(\theta_{n+1})}_2^2\cdot y^T\text{diag}\{\lambda_1,\lambda_2,...,\lambda_N\}y \nonumber\\
&=& \norm{\nabla_{\theta} L(\theta_{n+1})}_2^2\sum_{i=1}^{N}\lambda_iy_i^2.
\eea
Combining these together we get
\begin{equation}
  L(\theta^{n+1})-L(\theta^n) = \alpha \norm{\nabla_{\theta}L(\theta^n)}_2^2
  \left(-1-\f{1}{2}\alpha\sum_{i=1}^{N}\lambda_iy_i^2\right).
\end{equation}
When $\theta_n$ approaches a local or global minimum of the loss function $L(\theta)$, we know that the Hessian matrix is semi-positive definite, i.e. $\lambda_i\geq0$ for all $i=1,2,...,N$.
This implies $L(\theta^{n+1})-L(\theta^n)<0$.
Moreover, when the gradient flow dynamics of PINNs is stiff, i.e., there exits at least one eigenvalues $\lambda_i$ that is very large, the loss may decay even faster compared to the non-stiff case.
This simple analysis illustrates the robustness of IGD optimizer in training PINNs with multi-frequency and multi-scale features.

\section{Proof of Theorem 1}\label{app3}
\begin{proof}
The proof technique is from \cite{Du19}'s global convergence proof of gradient descent for fully connected neural networks.
The proof sketch is as follows.
First, we show that the loss training dynamics are decreasing,
thus, the weights can be close to their initialization as long as $m$ is large enough.
This fact shows that the change in Gram matrix $\norm{\mH(k)-\mH(0)}_2$ is small.
The initial Gram matrix is strictly positive definite with large probability,
so the strict positivity of $\mH(k)$ is still guaranteed as $k$ increases.
This in turn sharpens the loss function's decreases.

The proof of Theorem 1 is conducted by induction.
It is easy to verify that the condition holds for $k^{'}=0$ and assume that it holds for $k^{'}=k$.
Now for $k^{'}=k+1$, we have
\be\label{app:eq1}
\y-\u(k+1) = (1+\alpha\mH(k+1))^{-1}[\y-\u(k)+\mathbf{I}_0(k+1)].
\ee
Denote $\lambda_0 = \lambda_{\min}(\mH^{\infty})$.
From Lemma \ref{lem1} we know $\lambda_0>0$.
From Lemma \ref{lem4} we have $\lambda_{\min}(\mH(k+1))\geq\f{\lambda_0}{2}$, so
\be
\norm{(1+\alpha\mH(k+1))^{-1}}_2 \leq \f{1}{1+\f{\alpha\lambda_0}{2}},\nonumber
\ee
then we estimate Eq.\eqref{app:eq1} by
\bea
&& \norm{\y-\u(k+1)}_2^2 \nonumber\\
&\leq& \norm{(1+\alpha\mH(k+1))^{-1}}_2^2\left( \norm{\y-\u(k)}_2^2 +
2\inn{\y-\u(k),\mathbf{I}_0(k+1)} + \norm{\mathbf{I}_0(k+1)}_2^2\right)  \nonumber\\
&\leq& \left(\f{1}{1+\f{\alpha\lambda_0}{2}}\right)^2\left( \norm{\y-\u(k)}_2^2 +
2\inn{\y-\u(k),\mathbf{I}_0(k+1)} + \norm{\mathbf{I}_0(k+1)}_2^2\right). \nonumber
\eea
From Lemma \ref{lem5} we have
\bea
\inn{\y-\u(k),\mathbf{I}_0(k+1)} &\leq&
\f{\alpha\lambda_0}{8}\norm{\y-\u(k)}_2\norm{\y-\u(k+1)} \nonumber\\
&\leq& \f{\alpha\lambda_0}{16}\norm{\y-\u(k)}_2^2 +
       \f{\alpha\lambda_0}{16}\norm{\y-\u(k+1)}_2^2, \nonumber\\
\norm{\mathbf{I}_0(k+1)}_2^2 &\leq& \f{\alpha\lambda_0}{8}\norm{\y-\u(k+1)}_2^2. \nonumber
\eea
Subtracting we obtain
\bea
\norm{\y-\u(k+1)}_2^2 &\leq& \f{1+\f{\alpha\lambda_0}{8}}
{\left(1+\f{\alpha\lambda_0}{2}\right)^2-\f{\alpha\lambda_0}{4}} \norm{\y-\u(k)}_2^2 \nonumber\\
&\leq& \f{1}{1+\f{\alpha\lambda_0}{2}}\norm{\y-\u(k)}_2^2 \nonumber\\
&\leq& \left(\f{1}{1+\f{\alpha\lambda_0}{2}}\right)^{k+1}\norm{\y-\u(0)}_2^2. \nonumber
\eea
Noting that $L(k+1)=\f{1}{2}\norm{\y-\u(k+1)}_2^2$ we finish the proof.
\end{proof}

\subsection{Some Lemmas in the proof of Theorem 1}
The following two lemmas can be found in \citet{Du19}.
Lemma \ref{lem1} is Lemma F.2 in \citet{Du19}, and Lemma \ref{lem2} is Lemma B.2 in \citet{Du19} in a simple form.
For self-consistency, we also give the proof simply.
\begin{lemma}\label{lem1}
  Under Assumption 2, the Gram matrix is strictly positive definite, i.e.,
  $\lambda_0\triangleq \lambda_{\min}(\mH^{\infty}) >0$.
\end{lemma}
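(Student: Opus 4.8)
The plan is to read $\mH^{\infty}$ as the Gram matrix of a family of vector-valued feature maps in an $L^2$ space and to reduce strict positive definiteness to the linear independence of those maps. Set $\gamma=\mathcal{N}(\mathbf{0},\mathbf{I})$ on $\R^d$ and, for each $i\in[N]$, define $\phi_i:\R^d\to\R^d$ by $\phi_i(\w)=\x_i\,\sigma^{'}(\w^T\x_i)$. Then $\mH^{\infty}_{ij}=\x_i^T\x_j\,\mathbb{E}_{\w\sim\gamma}[\sigma^{'}(\w^T\x_i)\sigma^{'}(\w^T\x_j)]=\inn{\phi_i,\phi_j}_{L^2(\gamma)}$, so $\mH^{\infty}$ is positive semidefinite, and $\lambda_0=\lambda_{\min}(\mH^{\infty})>0$ is equivalent to the $\phi_i$ being linearly independent in $L^2(\gamma)$. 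Because $\sigma^{'}$ is continuous (Assumption \ref{ass1}) every $\phi_i$ is continuous, and $\gamma$ has full support on $\R^d$; hence any relation $\sum_{i=1}^N c_i\phi_i=0$ in $L^2(\gamma)$ upgrades to the pointwise identity $\sum_{i=1}^N c_i\,\x_i\,\sigma^{'}(\w^T\x_i)=0$ for all $\w\in\R^d$.

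Next I would use analyticity to convert this into algebraic constraints. Since $\sigma$ is analytic, $\sigma^{'}(z)=\sum_{k\ge0}a_kz^k$ on a disc of some radius $\rho>0$ around $0$, and since $\sigma$ is not a polynomial infinitely many $a_k$ are nonzero (otherwise $\sigma^{'}$ would coincide with a polynomial near $0$ and, by the identity theorem, everywhere). Restricting $\w$ to the neighborhood $U=\{\w:\ |\w^T\x_i|<\rho\text{ for all }i\}$ of the origin and substituting the series, the pointwise identity reads $\sum_{k\ge0}a_k\big(\sum_{i=1}^N c_i\,\x_i\,(\w^T\x_i)^k\big)=0$ on $U$; equating homogeneous parts in $\w$ yields, for every $k$ with $a_k\neq0$, the identity $\sum_{i=1}^N c_i\,\x_i\,(\w^T\x_i)^k\equiv0$, equivalently the symmetric-tensor equation $\sum_{i=1}^N c_i\,\x_i^{\otimes(k+1)}=0$.

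Finally I would close the argument with a standard linear-independence fact about tensor powers: if the vectors $\x_1,\dots,\x_N$ are nonzero and pairwise non-parallel --- which is precisely Assumption \ref{ass2}, noting that $\x_i\neq\mathbf{0}$ follows from it by taking the constant $c=0$ --- then the rank-one symmetric tensors $\x_1^{\otimes(k+1)},\dots,\x_N^{\otimes(k+1)}$ are linearly independent once $k+1$ is large enough (a Vandermonde-type argument: raise to the $(k{+}1)$st power a linear form that separates a given $\x_j$ from the span of the remaining $\x_i$). Picking such a $k$ with $a_k\neq0$, which exists because infinitely many $a_k$ are nonzero, forces $c_1=\dots=c_N=0$. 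Thus the $\phi_i$ are linearly independent, $\mH^{\infty}\succ0$, and $\lambda_0>0$ as claimed; as $\mH^{\infty}$ is a fixed $N\times N$ matrix, $\lambda_0$ is a well-defined positive constant used throughout the remainder of the proof. The only genuinely delicate points are this tensor linear-independence fact and the legitimacy of matching homogeneous components termwise on $U$; the rest is bookkeeping, and this is essentially the route of Lemma F.2 in \citet{Du19}.
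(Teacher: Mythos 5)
Your argument is correct and follows essentially the same route as the paper's proof (itself adapted from Lemma F.2 of Du et al.): reduce strict positivity to linear independence of the feature maps $\w\mapsto\x_i\,\sigma^{'}(\w^T\x_i)$ in $L^2$ of the Gaussian, upgrade the a.e.\ relation to a pointwise one, use analyticity and non-polynomiality of $\sigma$ to extract a tensor relation $\sum_i c_i\,\x_i^{\otimes(k+1)}=0$, and invoke linear independence of sufficiently high tensor powers of pairwise non-parallel vectors (Assumption 2). The only cosmetic difference is that the paper obtains the tensor relation by differentiating the pointwise identity $N-1$ times and then choosing $\w$ with $\sigma^{(N)}(\w^T\x_i)\neq 0$, whereas you match homogeneous Taylor components and select a nonzero coefficient $a_k$ --- the same computation in different clothing.
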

\begin{proof}
  To show $\mH^{\infty}$ is strictly positive definite,
  it is equivalent to showing that for $\v\in\mathbb{R}^d$, $\v^T\mH^{\infty}\v=0$ implies $\v=\mathbf{0}$. That is,
  \be
  0=\sum_{i,j=1}^{n}v_iv_j\x_i^T\x_j\mathbb{E}_{\w\sim\mathcal{N}(\mathbf{0},\mathbf{I})}
  \sigma^{'}(\w^T\x_i)\sigma^{'}(\w^T\x_j).
  \ee
  We obtain
  \be
  0 = \sum_{i=1}^{n}v_i\phi_{\x_i}(\w) = \sum_{i=1}^{n}v_i\sigma^{'}(\w^T\x_i)\x_i,
  \quad a.e. \;\w\in\mathbb{R}^d.
  \ee
  Differentiating the above equation $n-1$ times with respect to $\w$, we have
  \be
  0 = \sum_{i=1}^{n}v_i\sigma^{(n)}(\w^T\x_i)\x_i^{\otimes(n)},
  \quad a.e. \;\w\in\mathbb{R}^d.
  \ee
  From Lemma G.6 in \citet{Du19}, we know that $\{\x_i^{\otimes(n)}\}_{i=1}^n$ are linearly independent under Assumption 2.
  Therefore, we have $v_i\sigma^{(n)}(\w^T\x_i)=0$ for all $i\in[n]$ and $\w\in\mathbb{R}^d$.
  Choosing $\w$ such that $\sigma^{(n)}(\w^T\x_i)\neq0$, we obtain $v_i=0$ for all $i\in[n]$.
  So we complete the proof.
\end{proof}

\begin{lemma}\label{lem2}
  If $m\geq \f{16Cn^2}{\lambda_0^2}\log(\f{2n}{\delta})$ for some constant $C$,
  then with probability $1-\delta$ we have
\be
\lambda_{\min}(\mH(0)) \geq \f{3\lambda_0}{4}.
\ee
\end{lemma}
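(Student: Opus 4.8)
The plan is to treat this as a matrix concentration statement: show that the random matrix $\mH(0)$ is close to its expectation $\mH^{\infty}$ in operator norm, and then combine this with Weyl's inequality and the strict positivity $\lambda_0=\lambda_{\min}(\mH^{\infty})>0$ furnished by Lemma \ref{lem1}. It suffices to prove that, with probability at least $1-\delta$, $\norm{\mH(0)-\mH^{\infty}}_2\le\lambda_0/4$; Weyl's inequality then gives $\lambda_{\min}(\mH(0))\ge\lambda_{\min}(\mH^{\infty})-\norm{\mH(0)-\mH^{\infty}}_2\ge\lambda_0-\lambda_0/4=3\lambda_0/4$, which is the claim.

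First I would rewrite $\mH(0)$ as an empirical average over the $m$ hidden units. Since $u(\W,\mathbf{a},\x)=\f{1}{\sqrt m}\sum_{r}a_r\sigma(\w_r^T\x)$ we have $\f{\p u_i}{\p\w_r}=\f{1}{\sqrt m}a_r\sigma^{'}(\w_r^T\x_i)\x_i$, and because $a_r\in\{-1,1\}$ we get $a_r^2=1$, so from Eq.\eqref{sec3-1}
\be
\mH_{ij}(0)=\f{1}{m}\sum_{r=1}^{m}\sigma^{'}(\w_r^T\x_i)\,\sigma^{'}(\w_r^T\x_j)\,\x_i^T\x_j .\nonumber
\ee
Taking expectation over $\w_r\sim\mathcal{N}(\mathbf{0},\mathbf{I})$ and comparing with Eq.\eqref{sec3-3} gives $\mathbb{E}[\mH_{ij}(0)]=\mH^{\infty}_{ij}$, so that $\mH(0)=\f1m\sum_{r=1}^{m}\mathbf{G}^{(r)}$ with the $\mathbf{G}^{(r)}$ i.i.d., $\mathbf{G}^{(r)}_{ij}=\sigma^{'}(\w_r^T\x_i)\sigma^{'}(\w_r^T\x_j)\x_i^T\x_j$, and $\mathbb{E}\mathbf{G}^{(r)}=\mH^{\infty}$.

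Next I would establish an entrywise concentration bound and lift it to the operator norm through the Frobenius norm. Fix a pair $(i,j)$. Under Assumption \ref{ass1}, $z\mapsto\sigma^{'}(z)$ is $c$-Lipschitz, so $\sigma^{'}(\w_r^T\x_i)$ is a Lipschitz function of the standard Gaussian vector $\w_r$ with constant $c\norm{\x_i}_2\le cC$, hence sub-Gaussian; therefore each summand $\sigma^{'}(\w_r^T\x_i)\sigma^{'}(\w_r^T\x_j)\x_i^T\x_j$ is sub-exponential with parameters depending only on $C$ (when $\sigma^{'}$ is itself bounded, as for $\tanh$, plain Hoeffding already suffices). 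A Bernstein-type inequality then yields $\mathbb{P}(|\mH_{ij}(0)-\mH^{\infty}_{ij}|\ge t)\le 2\exp(-c' m t^2)$ for $t$ below a constant. Choosing $t=\lambda_0/(4n)$ and taking a union bound over the $n^2$ entries, the hypothesis $m\ge\f{16Cn^2}{\lambda_0^2}\log(\f{2n}{\delta})$ (using $\log(2n^2/\delta)\le 2\log(2n/\delta)$ and absorbing constants into $C$) makes the total failure probability at most $\delta$. On the complementary event,
\be
\norm{\mH(0)-\mH^{\infty}}_2^2\le\norm{\mH(0)-\mH^{\infty}}_F^2=\sum_{i,j=1}^{n}|\mH_{ij}(0)-\mH^{\infty}_{ij}|^2\le n^2\cdot\f{\lambda_0^2}{16n^2}=\f{\lambda_0^2}{16},\nonumber
\ee
so $\norm{\mH(0)-\mH^{\infty}}_2\le\lambda_0/4$ and Weyl's inequality finishes the argument as in the first paragraph.

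The main obstacle is the concentration step: unlike the ReLU analysis of \citet{Du19}, where $\sigma^{'}$ is an indicator and Hoeffding's inequality applies verbatim, here $\sigma^{'}$ need not be bounded, so one must extract a sub-Gaussian/sub-exponential tail for the summand from the Lipschitz and analyticity hypotheses in Assumption \ref{ass1} and route the bound through Bernstein's inequality; everything else — the outer-product identity for $\mH(0)$, the union bound, the Frobenius-to-operator comparison, and Weyl's inequality — is routine. Note that the constant $C$ appearing in the statement is understood to absorb these sub-exponential parameters.
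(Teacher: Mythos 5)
Your proposal is correct and follows essentially the same route as the paper: entrywise concentration for $\mH_{ij}(0)$ around $\mH^{\infty}_{ij}$, a union bound over the $n^2$ pairs, passage from the Frobenius norm to the operator norm, and Weyl's inequality to conclude $\lambda_{\min}(\mH(0))\geq 3\lambda_0/4$. The only divergence is your detour through sub-exponential tails and Bernstein's inequality, which is unnecessary here: the Lipschitz condition $|\sigma(z_1)-\sigma(z_2)|\leq c|z_1-z_2|$ in Assumption \ref{ass1} already forces $|\sigma^{'}(z)|\leq c$ everywhere, so each summand is bounded by $c^2C^2$ and plain Hoeffding applies verbatim, exactly as in the paper's proof.
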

\begin{proof}
  This can be proven by the standard concentration technique.
  For every fixed $(i,j)$ pair, $\mH_{ij}(0)$ can be considered the average of the independent random variables. From the Hoeffding inequality we have with probability $1-\delta^{'}$
  \be
  |\mH_{ij}(0)-\mH_{ij}^{\infty}| \leq \sqrt{\f{C\log(\f{2}{\delta^{'}})}{2m}}.\nonumber
  \ee
  Setting $\delta^{'}=n^2\delta$ and applying union bound over $(i,j)$ pairs, we have for all $(i,j)$ pairs with probability at least $1-\delta$
  \be
  |\mH_{ij}(0)-\mH_{ij}^{\infty}| \leq \sqrt{\f{C\log(\f{2n}{\delta})}{m}}. \nonumber
  \ee
  Thus, we have
  \be
  \norm{\mH(0)-\mH^{\infty}}_2 \leq \norm{\mH(0)-\mH^{\infty}}_F =
  \sqrt{\sum_{i,j=1}^{n}|\mH_{ij}(0)-\mH_{ij}^{\infty}|^2} \leq
  \sqrt{\f{Cn^2\log(\f{2n}{\delta})}{m}}
  \leq \f{\lambda_0}{4}.\nonumber
  \ee
  So we obtain $\lambda_{\min}(\mH(0))\geq \lambda_{\min}(\mH^{\infty}) - \norm{\mH(0)-\mH^{\infty}}_2\geq \f{3\lambda_0}{4}$.
\end{proof}

The following lemma shows that if the induction holds, we have every weight vector close to its initialization.
\begin{lemma}\label{lem3}
If $L(k^{'}) \leq \left(1+\f{\alpha\lambda_{\min}(\mH^{\infty})}{2}\right)^{-k^{'}} L(0)$ holds for $k^{'}=1,2,...,k$, assume $\alpha\leq\f{C\lambda_0}{n^2}$,
we have for $s=1,2,...,k,k+1$
\bea
\norm{\w_r(s)-\w_r(s-1)}_2 &\leq& \f{C\alpha\sqrt{n}}{\sqrt{m}}\norm{\y-\u(s)}_2,\\
\norm{\w_r(s)-\w_r(0)}_2   &\leq& \f{2Cn}{\lambda_0\sqrt{m}}.
\eea
\end{lemma}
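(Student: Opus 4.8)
The plan is to unroll the implicit update for a single first-layer weight vector, bound it term by term exactly as in the gradient-descent analysis of \citet{Du19}, then telescope over steps and feed in the assumed geometric decay of the loss.

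\textbf{Per-step bound.} First I would specialize Eq.\eqref{w_igd} to the $r$-th neuron. Since the second layer $\mathbf a$ is fixed and $L(\W,\mathbf a)=\sum_i\frac12|y_i-u_i|^2$ with $u_i=\frac1{\sqrt m}\sum_r a_r\sigma(\w_r^T\x_i)$, the chain rule gives
\[
\w_r(s)-\w_r(s-1)=\frac{\alpha a_r}{\sqrt m}\sum_{i=1}^{n}\big(y_i-u_i(s)\big)\,\sigma'(\w_r(s)^T\x_i)\,\x_i ,
\]
where $u_i(s)=u(\W(s),\mathbf a,\x_i)$. In contrast to GD, the residual and the derivative on the right are evaluated at the \emph{new} iterate $\W(s)$, but this is harmless here. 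Using $|a_r|=1$, $|\sigma'(\cdot)|\le c$ (which follows from the Lipschitz hypothesis on $\sigma$ in Assumption \ref{ass1}), and $\norm{\x_i}_2\le C$, and then Cauchy--Schwarz to pass from $\sum_i|y_i-u_i(s)|$ to $\sqrt n\,\norm{\y-\u(s)}_2$, yields the first inequality with a constant independent of $m,n,\lambda_0$.

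\textbf{Telescoping.} For the second inequality I would write $\w_r(s)-\w_r(0)=\sum_{s'=1}^{s}\big(\w_r(s')-\w_r(s'-1)\big)$ and apply the per-step bound, so that
\[
\norm{\w_r(s)-\w_r(0)}_2\le\frac{C\alpha\sqrt n}{\sqrt m}\sum_{s'=1}^{s}\norm{\y-\u(s')}_2 .
\]
The induction hypothesis $L(s')=\frac12\norm{\y-\u(s')}_2^2\le(1+\alpha\lambda_0/2)^{-s'}L(0)$ gives $\norm{\y-\u(s')}_2\le q^{s'}\norm{\y-\u(0)}_2$ with $q=(1+\alpha\lambda_0/2)^{-1/2}<1$, so the sum is at most the geometric tail $\sum_{s'\ge1}q^{s'}=q/(1-q)\le 1/(1-q)$. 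It then remains to show $1-q\ge \alpha\lambda_0/8$, which follows from the elementary inequality $1-(1+x)^{-1/2}\ge x/4$ for $0\le x\le 1$ applied to $x=\alpha\lambda_0/2$; this $x$ is indeed $\le 1$ since $\lambda_0=\lambda_{\min}(\mH^\infty)\le\frac1n\,\mathrm{tr}(\mH^\infty)\le C$ together with $\alpha\le C\lambda_0/n^2$. The two factors of $\alpha$ then cancel, and invoking the standard initialization estimate $\norm{\y-\u(0)}_2\le C\sqrt n$ --- which holds with probability $1-\delta$ because $\mathbb{E}[u_i(0)]=0$, $u_i(0)$ is $O(1)$-subgaussian, and $|y_i|\le C$ --- produces $\norm{\w_r(s)-\w_r(0)}_2\le \frac{Cn}{\lambda_0\sqrt m}$; absorbing constants into $C$ (as the paper's convention permits) gives the stated $\frac{2Cn}{\lambda_0\sqrt m}$.

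\textbf{Main obstacle.} Nothing here is genuinely deep --- the lemma is essentially bookkeeping --- but the step needing the most care is the geometric-series estimate: one must verify that the step-size restriction $\alpha\le C\lambda_0/n^2$, together with the a priori bound $\lambda_0\le C$, keeps $\alpha\lambda_0/2$ in the regime where $1-(1+x)^{-1/2}$ is of order $x$, since otherwise the sum over $s'$ would not be $O(1/(\alpha\lambda_0))$ and the factor $\alpha$ would not cancel. The only place the implicit nature of IGD enters the argument, relative to the GD proof of \citet{Du19}, is the evaluation point of the gradient in the per-step bound; one should confirm it does not break the telescoping, and it does not, because the hypothesis supplies geometric decay of $\norm{\y-\u(s')}_2$ at every index $s'$.
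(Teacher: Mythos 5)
Your argument follows the paper's proof almost line for line for the per-step bound and for the telescoping over $s\le k$: the same chain-rule expression for $\w_r(s)-\w_r(s-1)$ (with residual and $\sigma'$ evaluated at the new iterate, which, as you correctly observe, changes nothing in the norm bound), the same $\ell^1$-to-$\ell^2$ passage producing the $\sqrt n$, the same geometric series with $1-q=\Theta(\alpha\lambda_0)$ cancelling the prefactor $\alpha$, and the same $\norm{\y-\u(0)}_2=O(\sqrt n)$ initialization estimate.

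The one genuine gap is at $s=k+1$. The lemma's hypothesis grants the decay $L(k')\le(1+\alpha\lambda_0/2)^{-k'}L(0)$ only for $k'=1,\dots,k$, yet your telescoped sum $\sum_{s'=1}^{s}\norm{\y-\u(s')}_2$ with $s=k+1$ contains the term $\norm{\y-\u(k+1)}_2$, to which you apply the bound $q^{k+1}\norm{\y-\u(0)}_2$ that is not assumed --- and cannot be assumed, since controlling $L(k+1)$ is precisely what the induction step of Theorem 1 is trying to establish using this lemma. The paper closes this by a separate step: it invokes the one-step recursion $\y-\u(k+1)=(1+\alpha\mH(k+1))^{-1}[\y-\u(k)+\mathbf{I}_0(k+1)]$ to obtain the a priori bound $\norm{\y-\u(k+1)}_2\le(1+\alpha\lambda_0/2)^{-k/2}\norm{\y-\u(0)}_2+Cn\alpha^2$, deduces $\norm{\w_r(k+1)-\w_r(k)}_2\le Cn/(\lambda_0\sqrt m)$ from it, and adds this to the $s\le k$ telescoped bound --- which is exactly where the factor $2$ in the stated $\f{2Cn}{\lambda_0\sqrt m}$ comes from (your argument, if it worked, would give the bound without the $2$). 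You need this extra control of the $(k+1)$-th residual, or some equivalent, for the case $s=k+1$; everything else in your proposal is sound and matches the paper's route.
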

\begin{proof}
The training dynamics have the following relation for $s=1,2,...,k,k+1$
\be
\y-\u(s) = (1+\alpha\mH(s))^{-1}[\y-\u(s-1)+\mathbf{I}_0(s)],\nonumber
\ee
so
\bea
\norm{\y-\u(s)}_2 &\leq& \norm{(1+\alpha\mH(s))^{-1}}_2[\norm{\y-\u(s-1)}_2+\norm{\mathbf{I}_0(s)}_2] \nonumber\\
&\leq& \left(\f{1}{1+\f{\alpha\lambda_0}{2}}\right)^{\f{s-1}{2}}\norm{\y-\u(0)}_2 + Cn\alpha^2.
\eea
The weights change
\be
\w_r(s)-\w_r(s-1) = -\alpha\sum_{j=1}^{n}(y_j-u_j(s))\f{1}{\sqrt{m}}a_r\x_j\sigma^{'}(\w_r(s)^T\x_j),\nonumber
\ee
so
\bea
\norm{\w_r(s)-\w_r(s-1)}_2 &\leq& \f{C\alpha}{\sqrt{m}}\sum_{j=1}^{n}|y_j-u_j(s)| \nonumber\\
&\leq& \f{C\alpha\sqrt{n}}{\sqrt{m}}\norm{\y-\u(s)}_2.\nonumber
\eea
Similarly, we can bound for $s=1,2,\cdots,k$
\bea
\norm{\w_r(s)-\w_r(0)}_2 &\leq& \f{C\alpha\sqrt{n}}{\sqrt{m}}\sum_{s^{'}=1}^{s}\norm{\y-\u(s^{'})}_2 \nonumber\\
&\leq& \f{C\alpha\sqrt{n}}{\sqrt{m}}\sum_{s^{'}=1}^{s}\left(\f{1}{1+\f{\alpha\lambda_0}{2}}\right)^{\f{s^{'}}{2}}\norm{\y-\u(0)}_2 \nonumber\\
&\leq& \f{C\alpha\sqrt{n}}{\sqrt{m}}\sum_{s^{'}=1}^{\infty}\left(\f{1}{1+\f{\alpha\lambda_0}{2}}\right)^{\f{s^{'}}{2}}\norm{\y-\u(0)}_2 \nonumber\\
&\leq& \f{Cn}{\sqrt{m}\lambda_0},
\eea
and for $s=k+1$
\bea
\norm{\w_r(k+1)-\w_r(k)}_2 &\leq& \f{C\alpha\sqrt{n}}{\sqrt{m}}\norm{\y-\u(k+1)}_2 \nonumber\\
&\leq& \f{C\alpha\sqrt{n}}{\sqrt{m}}\left(\f{\norm{\y-\u(0)}_2}{(1+\f{\alpha\lambda_0}{2})^{\f{k}{2}}}+n\alpha^2\right) \nonumber\\
&\leq& \f{Cn}{\sqrt{m}\lambda_0}.
\eea
Hence, we have
\be
\norm{\w_r(k+1)-\w_r(0)}_2 \leq \f{2Cn}{\sqrt{m}\lambda_0}.
\ee
and we finish the proof.
\end{proof}

\begin{lemma}\label{lem4}
If $L(k^{'}) \leq \left(1+\f{\alpha\lambda_{\min}(\mH^{\infty})}{2}\right)^{-k^{'}} L(0)$ holds for $k^{'}=1,2,...,k$, assume $\alpha\leq\f{C\lambda_0}{n^2}$
and $m\geq\f{16C^2n^4}{\lambda_0^4}$, then we have for $s=1,2,...,k,k+1$
\bea
\norm{\mH(s)-\mH(0)}_2 &\leq& \f{\lambda_0}{4},\\
\lambda_{\min}(\mH(s)) &\geq& \f{\lambda_0}{2}.
\eea
\end{lemma}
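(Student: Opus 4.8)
The plan is to treat this as a matrix perturbation argument: $\mH(s)$ depends on the trained weights only through the scalars $\sigma'(\w_r(s)^T\x_i)$, so if every $\w_r$ has barely moved from its initialization — which is precisely what Lemma~\ref{lem3} guarantees under the stated hypothesis — then $\mH(s)$ stays close to $\mH(0)$ in operator norm, and the strict positive definiteness established at initialization in Lemma~\ref{lem2} persists. Concretely, first I would invoke Lemma~\ref{lem3}: since the loss-decay bound is assumed for $k'=1,\dots,k$ and $\alpha\le C\lambda_0/n^2$, we obtain $\norm{\w_r(s)-\w_r(0)}_2\le \f{2Cn}{\lambda_0\sqrt{m}}$ for every $r\in[m]$ and every $s=1,\dots,k,k+1$.

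Next I would write the Gram matrix entrywise, using $a_r\in\{-1,1\}$ so that $a_r^2=1$: $\mH_{ij}(s)=\f{\x_i^T\x_j}{m}\sum_{r=1}^{m}\sigma'(\w_r(s)^T\x_i)\sigma'(\w_r(s)^T\x_j)$, and bound $|\mH_{ij}(s)-\mH_{ij}(0)|$ term by term. For each $r$ I split the product difference $\sigma'(\w_r(s)^T\x_i)\sigma'(\w_r(s)^T\x_j)-\sigma'(\w_r(0)^T\x_i)\sigma'(\w_r(0)^T\x_j)$ by the triangle inequality and control each factor with Assumption~\ref{ass1}: $\sigma'$ is bounded by the Lipschitz constant of $\sigma$, $\sigma'$ is itself Lipschitz, and $\norm{\x_i}_2\le C$, so each such difference is $O\!\big(C\norm{\w_r(s)-\w_r(0)}_2\big)$. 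Together with $|\x_i^T\x_j|\le C$ and summing over $r$, this gives $|\mH_{ij}(s)-\mH_{ij}(0)|\le \f{Cn}{\lambda_0\sqrt{m}}$ after absorbing absolute constants. Passing from entries to the operator norm through the Frobenius norm, $\norm{\mH(s)-\mH(0)}_2\le\norm{\mH(s)-\mH(0)}_F\le\f{Cn^2}{\lambda_0\sqrt{m}}$, and substituting $m\ge 16C^2n^4/\lambda_0^4$ yields $\norm{\mH(s)-\mH(0)}_2\le\lambda_0/4$, which is the first claim. The second claim then follows from Weyl's inequality combined with Lemma~\ref{lem2}: $\lambda_{\min}(\mH(s))\ge\lambda_{\min}(\mH(0))-\norm{\mH(s)-\mH(0)}_2\ge\f{3\lambda_0}{4}-\f{\lambda_0}{4}=\f{\lambda_0}{2}$.

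The point to be most careful about is that this lemma is one leg of a joint induction together with Lemma~\ref{lem3} and the loss-decay bound proven in Theorem~\ref{thm3}, so the argument for the index $s=k+1$ must use only facts already in hand — namely Lemma~\ref{lem3}'s movement bound (which itself relies only on the loss decay for $k'\le k$ and on the automatic positive semidefiniteness of the Gram matrices, $\mH(s)=JJ^T$) and the initialization estimate of Lemma~\ref{lem2} — so there is no circular dependence on $\lambda_{\min}(\mH(k+1))$ itself. Apart from that bookkeeping, the only genuine work is the product-rule Lipschitz estimate and verifying that the width condition $m\ge16C^2n^4/\lambda_0^4$ is exactly what is needed to push the Frobenius bound below $\lambda_0/4$; this is routine, so I do not expect a serious obstacle here.
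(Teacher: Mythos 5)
Your proposal is correct and follows essentially the same route as the paper's proof: an entrywise perturbation bound on $\mH_{ij}(s)-\mH_{ij}(0)$ via the Lipschitz continuity of $\sigma'$ and the weight-movement bound of Lemma~\ref{lem3}, passage from Frobenius to operator norm with the width condition $m\ge 16C^2n^4/\lambda_0^4$ giving exactly $\lambda_0/4$, and then Weyl's inequality against the initialization bound $\lambda_{\min}(\mH(0))\ge 3\lambda_0/4$ from Lemma~\ref{lem2}. Your remark about the induction bookkeeping at $s=k+1$ is accurate and consistent with how the paper organizes the argument.
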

\begin{proof}
  We calculate the difference between $\mH(s)$ and $\mH(0)$
  \bea
  \mH_{ij}(s)-\mH_{ij}(0) &=& \f{1}{m}\sum_{r=1}^{m} \x_i^T\x_j\sigma^{'}(\w_r(s)^T\x_i)\left[\sigma^{'}(\w_r(s)^T\x_j)-\sigma^{'}(\w_r(0)^T\x_j)\right] \nonumber\\
  & & +\f{1}{m}\sum_{r=1}^{m} \x_i^T\x_j\left[\sigma^{'}(\w_r(s)^T\x_i)-\sigma^{'}(\w_r(0)^T\x_i)\right]\sigma^{'}(\w_r(0)^T\x_j), \nonumber
  \eea
  and found it bounded by the weight difference
  \bea
  |\mH_{ij}(s)-\mH_{ij}(0)| \leq \f{C}{m}\sum_{r=1}^{m}\norm{\w_r(s)-\w_r(0)}_2,
  \eea
  so we obtain
  \be
  \norm{\mH(s)-\mH(0)}_2 \leq \norm{\mH(s)-\mH(0)}_F = \sqrt{\sum_{i,j=1}^{n}|\mH_{ij}(s)-\mH_{ij}(0)|^2} \leq \f{Cn^2}{\sqrt{m}\lambda_0} \leq \f{\lambda_0}{4},\nonumber
  \ee
  and $\lambda_{\min}(\mH(s))\geq \lambda_{\min}(\mH(0)) - \norm{\mH(s)-\mH(0)}_2\geq \f{\lambda_0}{2}$.
\end{proof}

\begin{lemma}\label{lem5}
If $L(k^{'}) \leq \left(1+\f{\alpha\lambda_{\min}(\mH^{\infty})}{2}\right)^{-k^{'}} L(0)$ holds for $k^{'}=1,2,...,k$, assume $\alpha\leq\f{C\lambda_0}{n^2}$,
then we have for $s=1,2,...,k,k+1$
\bea
\norm{\mathbf{I_0(s)}}_2 &\leq& \f{\alpha\lambda_0}{8}\norm{\y-\u(s)}_2,\\
\norm{\mathbf{I_0(s)}}_2^2 &\leq& \f{\alpha\lambda_0}{8}\norm{\y-\u(s)}_2^2.
\eea
\end{lemma}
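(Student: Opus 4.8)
The object $\mathbf{I}_0(s)$ is the first-order Taylor remainder that appears when the one-step change of the network outputs is linearized about the \emph{new} iterate $\W(s)$: starting from the IGD update of the first layer, $\w_r(s)-\w_r(s-1) = \f{\alpha}{\sqrt m}a_r\sum_j(y_j-u_j(s))\sigma'(\w_r(s)^T\x_j)\x_j$, and using $a_r^2=1$, one obtains $\u(s)-\u(s-1) = \alpha\mH(s)(\y-\u(s)) + \mathbf{I}_0^{\mathrm{d}}(s)$, hence $\y-\u(s) = (1+\alpha\mH(s))^{-1}[\y-\u(s-1)+\mathbf{I}_0(s)]$ with $\mathbf{I}_0(s) = -\mathbf{I}_0^{\mathrm{d}}(s)$ and
\be
[\mathbf{I}_0(s)]_i = -\f{1}{\sqrt m}\sum_{r=1}^{m}a_r\Bigl(\sigma(\w_r(s)^T\x_i)-\sigma(\w_r(s-1)^T\x_i)-\sigma'(\w_r(s)^T\x_i)(\w_r(s)-\w_r(s-1))^T\x_i\Bigr).
\ee
The plan is to bound this vector coordinatewise and then pass to the Euclidean norm over $i\in[n]$.

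First I would note that each summand is a Taylor remainder of $\sigma$ anchored at $\w_r(s)^T\x_i$: by Assumption \ref{ass1} ($\sigma'$ is $c$-Lipschitz) together with $\norm{\x_i}_2\leq C$, its absolute value is at most $\f{c}{2}|(\w_r(s)-\w_r(s-1))^T\x_i|^2 \leq C\norm{\w_r(s)-\w_r(s-1)}_2^2$. Summing the $m$ terms and using $|a_r|=1$ gives $|[\mathbf{I}_0(s)]_i|\leq \f{C}{\sqrt m}\sum_{r=1}^{m}\norm{\w_r(s)-\w_r(s-1)}_2^2$. I would then invoke Lemma \ref{lem3}, which under the current induction hypothesis is valid for every $s\in\{1,\dots,k+1\}$ and gives $\norm{\w_r(s)-\w_r(s-1)}_2\leq \f{C\alpha\sqrt n}{\sqrt m}\norm{\y-\u(s)}_2$; squaring and summing yields $|[\mathbf{I}_0(s)]_i|\leq \f{C\alpha^2 n}{\sqrt m}\norm{\y-\u(s)}_2^2$, and taking the $\ell_2$ norm over the $n$ coordinates gives $\norm{\mathbf{I}_0(s)}_2\leq \f{C\alpha^2 n^{3/2}}{\sqrt m}\norm{\y-\u(s)}_2^2$.

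To finish the first inequality I would peel off one factor of $\norm{\y-\u(s)}_2$ and bound the remaining one by $\norm{\y-\u(s)}_2\leq C\sqrt n$ (a consequence of the output recursion, Lemma \ref{lem4}, and the standard initialization estimate $\norm{\y-\u(0)}_2=\cO(\sqrt n)$ that holds with probability $1-\delta$ over the random start, cf. \citet{Du19}); this leaves $\norm{\mathbf{I}_0(s)}_2\leq \f{C\alpha^2 n^2}{\sqrt m}\norm{\y-\u(s)}_2$, and substituting $\alpha\leq\f{C\lambda_0}{n^2}$ together with $m\geq\f{16C^2 n^4}{\lambda_0^4}$ collapses the prefactor to at most $\f{\alpha\lambda_0}{8}$. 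The second inequality then follows either by squaring the first (since $\f{\alpha\lambda_0}{8}\leq 1$ for this range of $\alpha$) or, more directly, by keeping both factors: $\norm{\mathbf{I}_0(s)}_2^2\leq\bigl(\f{C\alpha^2 n^2}{\sqrt m}\bigr)^2\norm{\y-\u(s)}_2^2$, and the same smallness of $\alpha$ and largeness of $m$ force the prefactor below $\f{\alpha\lambda_0}{8}$.

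The main obstacle is bookkeeping around the implicitness rather than any single hard inequality. One must be scrupulous that, because IGD evaluates the gradient (and hence the Gram matrix $\mH(s)$) at the \emph{new} weights, the Taylor expansion has to be anchored at $\w_r(s)^T\x_i$, not $\w_r(s-1)^T\x_i$, for $\mathbf{I}_0(s)$ to be genuinely quadratic in the per-step displacement. One must also verify that the chain of dependencies is not circular: Lemma \ref{lem3} already covers the index $s=k+1$ under exactly the induction hypothesis in force, and the auxiliary bound $\norm{\y-\u(k+1)}_2\leq C\sqrt n$ used above follows from that hypothesis and Lemma \ref{lem4} alone, before the refined estimate of Lemma \ref{lem5} feeds back into the inductive step of Theorem \ref{thm3}. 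Everything else is a matter of collecting the powers of $C,\lambda_0,n,m,\alpha$ correctly.
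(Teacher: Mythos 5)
Your proposal is correct and follows essentially the same route as the paper: both identify $\mathbf{I}_0(s)$ as the second-order Taylor remainder of the output map anchored at the new iterate $\W(s)$, bound it quadratically in the per-step displacement $\alpha\,\p L(s)/\p\w_r$ (your detour through Lemma \ref{lem3} is the same estimate, since that lemma's per-step bound is exactly $\alpha$ times the gradient-norm bound the paper uses directly), and then peel off one factor of the residual via $\norm{\y-\u(s)}_2=\cO(\sqrt{n})$ and the smallness of $\alpha$. If anything, your bookkeeping is slightly more careful than the paper's, which silently absorbs a helpful factor of $1/\sqrt{m}$ and a factor of $\sqrt{n}$ into the generic constant $C$.
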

\begin{proof}
  For fixed $i\in[n]$ we have
  \be
  I_0^i(s) = \sum_{r=1}^{m}\int_{0}^{\alpha}\inn{\f{\p u_i(\W(s)+\eta\f{\p L(s)}{\p \w_r})}{\p \w_r}-\f{\p u_i(\W(s))}{\p \w_r},\f{\p L(s)}{\p \w_r}}d\eta, \nonumber
  \ee
  so we bound it by
  \bea
  |I_0^i(s)| &\leq& \sum_{r=1}^{m}\alpha^2\norm{\f{\p L(s)}{\p \w_r}}_2^2|u_i^{(2)}(\xi(s))| \nonumber\\
  &\leq& C \alpha^2\sum_{r=1}^{m}\norm{\f{\p L(s)}{\p \w_r}}_2^2. \nonumber
  \eea
  Note that
  \bea
  \norm{\f{\p L(s)}{\p \w_r}}_2 &=& \norm{\sum_{i=1}^{n}(y_i-u_i(s))\f{1}{\sqrt{m}}\x_i\sigma^{'}(\w_r(s)^T\x_i)}_2 \nonumber\\
  &\leq& \f{C}{\sqrt{m}}\sum_{i=1}^{n}|y_i-u_i(s)| \nonumber\\
  &\leq& \f{C\sqrt{n}}{\sqrt{m}}\norm{\y-\u(s)}_2, \nonumber
  \eea
  we have
  \be
  \norm{\mathbf{I}_0(s)}_2 \leq C\alpha^2n\norm{\y-\u(s)}_2^2. \nonumber
  \ee
  Since $\alpha\leq \f{C\lambda_0}{n^2}$ and noting that $\norm{\y-\u(0)}_2=\mathcal{O}(\sqrt{n})$,
  we have
  \bea
  \norm{\mathbf{I_0(s)}}_2 &\leq& \f{\alpha\lambda_0}{8}\norm{\y-\u(s)}_2,\\
  \norm{\mathbf{I_0(s)}}_2^2 &\leq& \f{\alpha\lambda_0}{8}\norm{\y-\u(s)}_2^2.
  \eea
\end{proof}

\section{Additional Computational Results}
\subsection{Neural network approximation of nonlinear multiscale and discontinuous functions}
In this test case, we use the standard neural network to approximate given functions.
Theoretically speaking, neural networks can approximate any continuous function in some function spaces.
However, training the neural network to approximate given functions is nontrivial,
especially for functions with multiscale phenomena or even discontinuous phenomena.

First, we consider a function with multiscale phenomenon
\be\label{ex1-eq1}
u(x) = (x^3-x)\f{\sin(4x)}{4} + \f{\sin(12x)}{x^2+1},\quad x\in[-3,3].
\ee
Second, we consider a function with discontinuity
\be\label{ex1-eq2}
u(x) = \left\{
\begin{array}{ll}
  \sin(4x) &  x \in [-3,0]  \\
  2 + x\sin(x) & x \in (0,3].
\end{array}\right.
\ee
The activation function is $tanh$, and the number of hidden layers is 4 with 50 neurons in each layer.

A recent study by \citet{RBA19} showed that neural network training suffers from the ``spectral bias" phenomenon,
which means that neural networks learn low frequencies first,
then learn high frequencies at a very slow rate.
Figure \ref{fig:ex1-1} shows the training loss and the predicted solution training by SGD, Adam and ISGD optimizers for the neural network approximation of the multiscale function Eq.\eqref{ex1-eq1}.
For a small learning rate $\alpha=0.005$, ISGD is very close to SGD,
and all three optimizers need many epochs (up to 350 K for ISGD) to learn both the low- and high-frequency components of Eq.\eqref{ex1-eq1}.
As the learning rate increases to $\alpha=0.05$, neither SGD nor Adam can learn the high-frequency components in smaller epochs,
while our ISGD method can capture all frequencies of Eq.\eqref{ex1-eq1} in 35k epochs.
When a larger learning rate applies, for example, $\alpha=0.5$ and $2.5$,
the SGD and Adam may not be convergent or even explode because of numerical instability,
while our ISGD method can still capture all frequencies of Eq.\eqref{ex1-eq1} in smaller epochs.

Figure \ref{fig:ex1-2} shows the training loss and the predicted solution training by SGD, Adam and ISGD optimizers for the neural network approximation of the discontinuous function Eq.\eqref{ex1-eq2}.
We see that for small learning rate $\alpha=0.005$ and $0.05$,
both three optimizers can approximate the function well, except for some oscillations near the discontinuity point $x=0$.
When a larger learning rate applies, for example, $\alpha=0.5$ and $2.5$,
the SGD and Adam may not be convergent or even explode because of numerical instability,
while our ISGD method can still approximate the discontinuity of Eq.\eqref{ex1-eq2} in smaller epochs.

\begin{figure}[tbp]
   \centering
     \subfloat[learning rate = 0.005]{\includegraphics[width=0.38\textwidth]{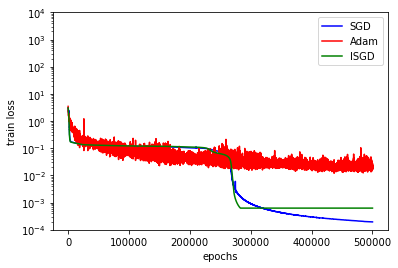}}
     \subfloat[learning rate = 0.005]{\includegraphics[width=0.38\textwidth]{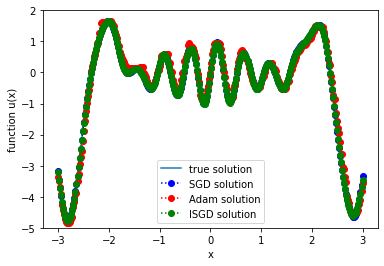}}\\
     \subfloat[learning rate = 0.05]{\includegraphics[width=0.38\textwidth]{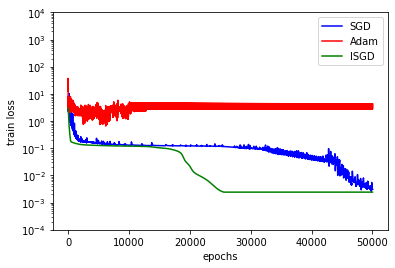}}
     \subfloat[learning rate = 0.05]{\includegraphics[width=0.38\textwidth]{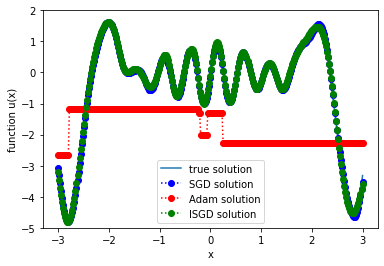}}\\
     \subfloat[learning rate = 0.5]{\includegraphics[width=0.38\textwidth]{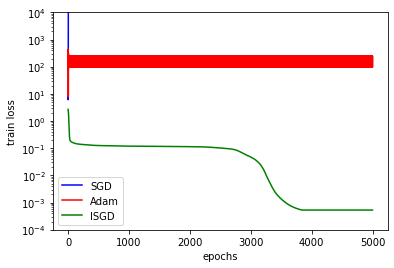}}
     \subfloat[learning rate = 0.5]{\includegraphics[width=0.38\textwidth]{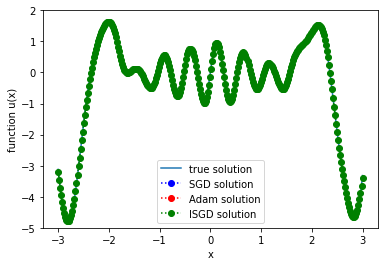}}\\
     \subfloat[learning rate = 2.5]{\includegraphics[width=0.38\textwidth]{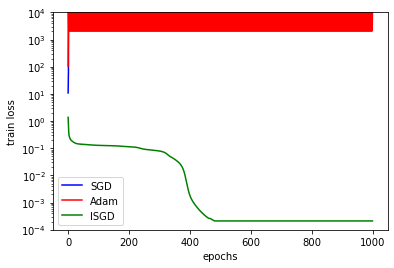}}
     \subfloat[learning rate = 2.5]{\includegraphics[width=0.38\textwidth]{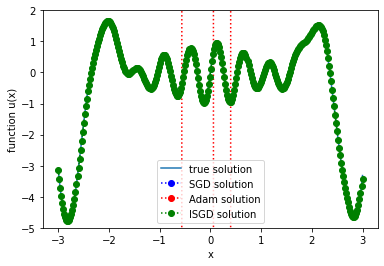}}
     \caption{The optimization training results for multiscale function approximation Eq.\eqref{ex1-eq1}. Left column: the training loss dynamics by three optimizers for learning rate = 0.005, 0.05, 0.5 and 2.5 (from top to bottom). Right column: the predicted solution trained by three optimizers for learning rate = 0.005,0.05,0.5 and 2.5(from top to bottom).}\label{fig:ex1-1}
\end{figure}

\begin{figure}[tbp]
   \centering
     \subfloat[learning rate = 0.005]{\includegraphics[width=0.38\textwidth]{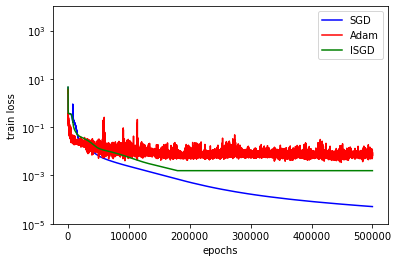}}
     \subfloat[learning rate = 0.005]{\includegraphics[width=0.38\textwidth]{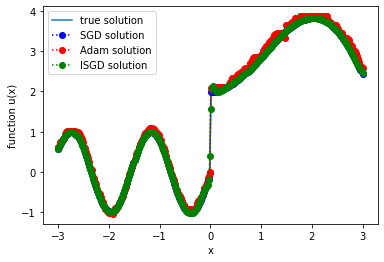}}\\
     \subfloat[learning rate = 0.05]{\includegraphics[width=0.38\textwidth]{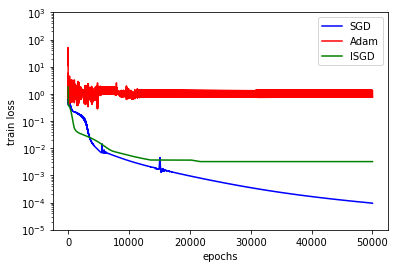}}
     \subfloat[learning rate = 0.05]{\includegraphics[width=0.38\textwidth]{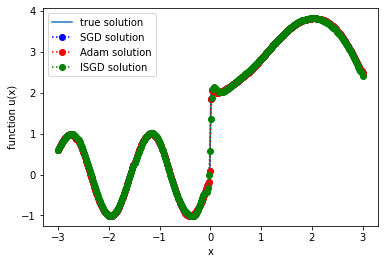}}\\
     \subfloat[learning rate = 0.5]{\includegraphics[width=0.38\textwidth]{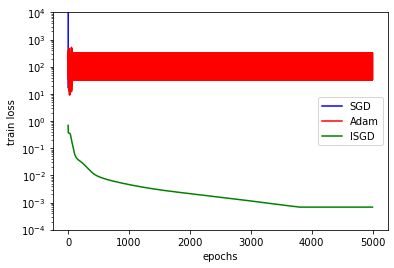}}
     \subfloat[learning rate = 0.5]{\includegraphics[width=0.38\textwidth]{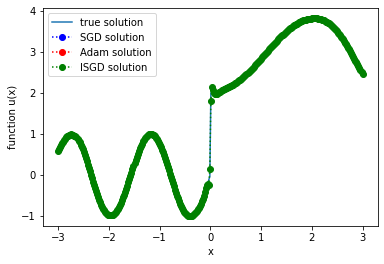}}\\
     \subfloat[learning rate = 2.5]{\includegraphics[width=0.38\textwidth]{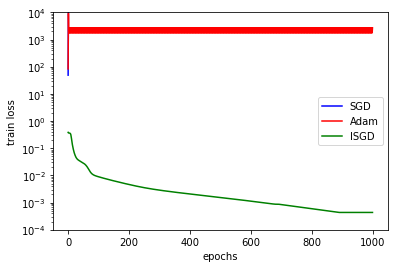}}
     \subfloat[learning rate = 2.5]{\includegraphics[width=0.38\textwidth]{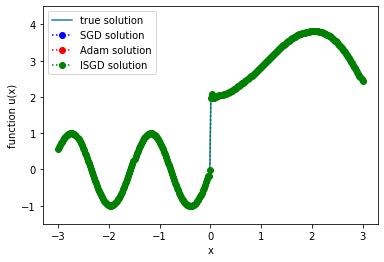}}
     \caption{The optimization training results for discontinuous function approximation Eq.\eqref{ex1-eq2}. Left column: the training loss dynamics by three optimizers for learning rate = 0.005, 0.05, 0.5 and 2.5 (from top to bottom). Right column: the predicted solution trained by three optimizers for learning rate = 0.005,0.05,0.5 and 2.5(from top to bottom).}\label{fig:ex1-2}
\end{figure}

\subsection{Standard deep learning benchmark problems}
The MNIST database is a database of handwritten digits that is commonly used for training various image processing systems.
It contains 60,000 training images and 10,000 testing images of dimension $24\times24=784$.
We train a two-layer neural network for the classification.
The neural network has a hidden layer with 128 units and \emph{ReLU} activations,
and an output layer with 10 units and \emph{Softmax} activations.

SGD, Adam optimizers and the ISGD optimizer proposed in this paper are compared for the neural network training.
In order to evaluate the training effect for different learning rates $\alpha$ and batch sizes $b$, we train fixed 10 epochs for batch size $b=32,128,512$ and fixed 100 epochs for the full batch $b=60,000$.
The test accuracy results are listed in Table \ref{tab1}.
Generally, the widely used SGD and Adam optimizers are both sensitive to the learning rates and batch sizes. They behave well for small learning rates and small batch sizes but may change dramatically for different learning rates and batch sizes and are especially unstable for large learning rates.
Conversely, the ISGD optimizer achieves the same level of accuracy as SGD and Adam for small learning rates and small batch sizes, and it can still be stable with high accuracy when the learning rate becomes very large.
This allows the nonexperts to train deep models in a more easier way.

\begin{table}[tbp]
\centering
\begin{tabular}{l|cccccc}\hline
 \multirow{2}*{\textbf{Optimizer}}  & \multicolumn{6}{c}{$b$ = 32}  \\\cline{2-7}
  & $\alpha=0.001$ & $\alpha=0.01$ & $\alpha=0.1$ & $\alpha=1.0$ & $\alpha=2.0$ & $\alpha=10.0$ \\\hline
 SGD  & 0.907 & 0.950 & \textbf{0.978} & 0.936 & 0.217 & 0.096  \\\hline
 Adam & \textbf{0.966} & \textbf{0.969} & 0.871 & 0.432 & 0.415 & 0.131  \\\hline
 ISGD & 0.903 & 0.950 & 0.975 & \textbf{0.978} & \textbf{0.973} & \textbf{0.971}  \\\hline\hline
 \multirow{2}*{\textbf{Optimizer}}  & \multicolumn{6}{c}{$b$ = 128}  \\\cline{2-7}
  & $\alpha=0.001$ & $\alpha=0.01$ & $\alpha=0.1$ & $\alpha=1.0$ & $\alpha=2.0$ & $\alpha=10.0$ \\\hline
 SGD  & 0.857 & 0.926 & 0.967 & 0.973 & 0.101 & 0.089 \\\hline
 Adam & \textbf{0.972} & \textbf{0.973} & 0.895 & 0.508 & 0.413 & 0.150 \\\hline
 ISGD & 0.853 & 0.924 & \textbf{0.968} & \textbf{0.978} & \textbf{0.977} & \textbf{0.967} \\\hline\hline
 \multirow{2}*{\textbf{Optimizer}}  & \multicolumn{6}{c}{$b$ = 512} \\\cline{2-7}
  & $\alpha=0.001$ & $\alpha=0.01$ & $\alpha=0.1$ & $\alpha=1.0$ & $\alpha=2.0$ & $\alpha=10.0$ \\\hline
 SGD  & 0.665 & 0.894 & 0.942 & \textbf{0.975} & 0.219 & 0.114  \\\hline
 Adam & \textbf{0.970} & \textbf{0.971} & 0.939 & 0.626 & 0.508 & 0.170  \\\hline
 ISGD & 0.691 & 0.895 & \textbf{0.943} & \textbf{0.975} & \textbf{0.976} & \textbf{0.969}  \\\hline\hline
 \multirow{2}*{\textbf{Optimizer}}  & \multicolumn{6}{c}{$b$ = full batch 60,000} \\\cline{2-7}
  & $\alpha=0.001$ & $\alpha=0.01$ & $\alpha=0.1$ & $\alpha=1.0$ & $\alpha=2.0$ & $\alpha=10.0$ \\\hline
 SGD  & 0.131 & 0.666 & 0.886 & \textbf{0.945} & 0.242 & 0.012 \\\hline
 Adam & \textbf{0.919} & \textbf{0.853} & 0.725 & 0.459 & 0.391 & 0.364 \\\hline
 ISGD & 0.141 & 0.626 & \textbf{0.887} & 0.941 & \textbf{0.952} & \textbf{0.976} \\\hline
\end{tabular}
\caption{Test accuracy comparison for different optimizers.}\label{tab1}
\end{table}

\end{document}